\DeclareMathOperator{\EX}{\mathbb{E}}
\newtheorem{theorem}{Theorem}
\newtheorem{definition}{Definition}[section]
\newtheorem{defn}{Definition}[section]
\newtheorem{lemma}{Lemma}
\newcommand{\p}{\mathbb{P}}
\newcommand{\R}{\mathbb{R}}
\newcommand{\bJm}{\mathbb{J}_{m-1}}
\newcommand{\bJmm}{\mathbb{J}_{m-1,m}}
\newcommand{\sql}{\sum_{q}\sum_{\ell:\ell>q}}
\newcommand{\bde}{\begin{defn}}
\newcommand{\ede}{\end{defn}}
\newcommand{\bX}{\mathbf{X}}
\newcommand{\fM}{\mathfrak{M}}
\newcommand{\fR}{\mathfrak{R}}
\newcommand{\fr}{\mathfrak{r}}
\newcommand{\bD}{\mathbf{D}}
\newcommand{\bL}{\mathbf{L}}
\newcommand{\bR}{\mathbf{R}}
\newcommand{\bJ}{\mathbf{J}}
\newcommand{\bA}{\mathbf{A}}
\newcommand{\bP}{\mathbf{P}}
\newcommand{\bhX}{\hat{\mathbf{X}}}
\newcommand{\tcA}{\tilde{\mathcal{A}}}
\newcommand{\cA}{\mathcal{A}}
\def\thm@space@setup{%
 \thm@preskip=\parskip \thm@postskip=0pt\label{key}
}
\let \@fnsymbol\@arabic
\title{Optimizing the Induced Correlation in Omnibus Joint Graph Embeddings }
\author{%
Konstantinos Pantazis\thanks{Department of Applied Mathematics and Statistics, Johns Hopkins University; Note: Konstantinos currently works for Deus Ex Machina.  His contribution to this paper was completed during his doctoral work at University or Maryland and postdoctoral fellowship at Johns Hopkins University},
Michael Trosset\thanks{Department of Statistics, Indiana University},
William N. Frost\thanks{Cell Biology and Anatomy, and Center for Brain Function and Repair, Chicago Medical School, Rosalind Franklin University of Medicine and Science},\\ 
Carey E. Priebe\thanks{Department of Applied Mathematics and Statistics and Center for Imaging Sciences, Johns Hopkins University},
Vince Lyzinski\thanks{Department of Mathematics,  University or Maryland}%
}
\date{\today}
\begin{document}

\maketitle
\begin{abstract}
Theoretical and empirical evidence suggests that joint graph embedding algorithms induce correlation across the networks in the embedding space.
In the Omnibus joint graph embedding framework, previous results explicitly delineated the dual effects of the algorithm-induced and model-inherent correlations on the correlation across the embedded networks.
Accounting for and mitigating the algorithm-induced correlation is key to subsequent inference, as sub-optimal Omnibus matrix constructions have been demonstrated to lead to loss in inference fidelity.
This work presents the first efforts to automate the Omnibus construction in order to address two key questions in this joint embedding framework:
the \textit{correlation--to--OMNI} problem and the \textit{flat correlation} problem. 
In the flat correlation problem, we seek to understand the minimum  algorithm-induced flat correlation (i.e., the same across all graph pairs) produced by a generalized Omnibus embedding.
Working in a subspace of the fully general Omnibus matrices, we prove both a lower bound for this flat correlation and that the classical Omnibus construction induces the maximal flat correlation. 
In the correlation--to--OMNI problem, we present an algorithm---named \texttt{corr2Omni}---that, from a given matrix of estimated pairwise graph correlations, estimates the matrix of generalized Omnibus weights that induces optimal correlation in the embedding space.
Moreover, in both simulated and real data settings, we demonstrate the increased effectiveness of our \texttt{corr2Omni} algorithm versus the classical Omnibus construction.
\end{abstract}

\section{Introduction}
\label{sec:introch4}

Inference across multiple networks is an important area of statistical network analysis, and there is a large literature devoted to developing methods for classical multi-sample inference tasks in the setting of network data.
One important tool in multiple network inference is network embeddings, which, for the vertices across the collection of networks, provide suitable low-dimensional representations more amenable to classical inferential methods.
Embedding multiple networks is often achieved by jointly embedding the collection of networks (see, e.g., \cite{levin2017central,wang2019joint,nielsen2018multiple,arroyo2021inference,pantazis2022importance,macdonald2022latent}) or computing separate embeddings and then computing pairwise statistics to account for any nonidentifiability in the embedding space (see, e.g., \cite{athreya2022discovering,chen2024euclidean,cape2021spectral}).
Our focus in this paper is joint network embeddings, which have proven useful across myriad inference tasks ranging from time-series analysis \cite{chen2020multiple,pantazis2022importance} to two-sample network testing \cite{levin2017central,zheng2022limit} to joint graph alignment \cite{nenning2020joint} to name a few.

In many cases, the price that joint embedding methods pay for bypassing the need to compute pairwise statistics (e.g., Procrustean rotations as in \cite{cape2021spectral} or distance metrics as in \cite{athreya2022discovering})
is a level of induced correlation across the networks in the embedding space.
The work in \cite{pantazis2022importance} was the first to precisely characterize this induced correlation in the setting of the Omnibus embedding method of \cite{levin2017central}.
Working in the (Joint) Random Dot Product Graph model of \cite{young2007random}, Theorem 14 in \cite{pantazis2022importance} was able to precisely delineate the level of induced correlation across embeddings in terms of the method induced correlation (induced here by the Omnibus construction) and the inherent model correlation across networks.
Moreover, careful constructions of the Omnibus matrix were shown to induce  correlation levels that were more amenable to subsequent inference goals.
The specific Omnibus constructions in \cite{pantazis2022importance} were ad hoc, and this motivates our present work: namely, we seek to answer two related questions about the Omnibus construction, stated informally below.
\begin{itemize}
\item[i.] If we seek to have equal correlation induced between any graph pair, how small can this induced correlation be?
\item[ii.] If the inherent correlation matrix between the graph collection is $\mathbf{R}$, can we construct an Omnibus embedding that induces correlation as  close to $\mathbf{R}$ as possible?
\end{itemize}
Before formally stating the above questions, we must first present the necessary background on the Omnibus embedding framework.

\subsection{Omnibus Embeddings}
\label{sec:omni}

The Omnibus embedding framework introduced in \cite{levin2017central} (written in shorthand as the OMNI embedding) is a methodology for simultaneously embedding a collection of networks 
$$\{\bA^{(1)},\bA^{(2)},\cdots,\bA^{(m)}\}$$ into a low-dimensional Euclidean space.
The methodology consists of two steps. First, construct the OMNI matrix, and then embed it using Adjacency Spectral Embedding.
We will first present the more general OMNI construction of \cite{pantazis2022importance}, as the classical OMNI construction of \cite{levin2017central} can be realized as a special case.
\begin{defn}[Generalized Omnibus Matrix]
\label{def:genOMNI}
The generalized Omnibus matrix
$\mathfrak{M}\in\R^{mn\times mn}$
is a block matrix satisfying the following conditions:
\begin{enumerate}
    \item Each block entry $\mathfrak{M}_{k,\ell}\in\R^{n\times n},k,\ell\in[m]$ can be written as $\mathfrak{M}_{k,\ell}=\sum_{q=1}^{m}c_{k,\ell}^{(q)}\bA^{(q)},$ where $\{c^{(q)}_{k,\ell}\}_{k,\ell,q}$ satisfy the \textit{convex combination constraints} (CCC), i.e., 
    \begin{enumerate}
        \item $c^{(q)}_{i,j}\in[0,1]$ for all $q,i,j\in [m]$,
        \item  $\sum_{q=1}^m c_{i,j}^{(q)}=1$ for all $i\leq j$.
    \end{enumerate}
    \item For each block row $k\in[m]$ of $\mathfrak{M}$, the cumulative weight of $\mathbf{A}^{(k)}$ is greater than the cumulative weight of the rest of the $\mathbf{A}^{(q)}$, $q\neq k$; i.e., for $q\neq k$
    $$\sum_{\ell} c^{(k,\ell)}_q < \sum_{\ell} c^{(k,\ell)}_k.$$
    \item $\mathfrak{M}$ is symmetric; here enforced by $c^{(q)}_{i,j}=c^{(q)}_{j,i}$ for all $i,j,q$.
\end{enumerate}
\end{defn}
\noindent The classical OMNI matrix of \cite{levin_omni_2017} is a special case of Definition \ref{def:genOMNI} where
$$c_{i,j}^{(q)}=\begin{cases}
\frac{1}{2}&\text{ if }i\neq j\text{ and }q=i\text{ or }q=j;\\
1&\text{ if }i= j=q\\
0&\text{ else}.
\end{cases}$$
Setting $\alpha(k,q)=\sum_{\ell} c^{(q)}_{k,\ell}$ to be the weight put on $\bA^{(q)}$ in the $k$-th block-row of $\mathfrak{M}$, condition 2 above becomes
\begin{align*}
\alpha(k,q)<\alpha(k,k)\hspace{.5cm}\text{ for all $q\neq k$}.
\end{align*}

Let the matrix of row sums be denoted $\mathcal{A}=[\alpha(k,q)]$, and note that $\mathcal{A}\vec{1}=m\cdot \vec{1}$.
Note that the tensor of weights $\mathbf{C}=[C_{k,\ell,q}=c_{k,\ell}^{(q)}]$ then satisfies the following relations.
\begin{itemize}
\item[i.] Partial symmetry: $C_{k,\ell,q}=C_{\ell,k,q}$;
\item[ii.] The $3$-mode vector products (see, for example, \cite{kolda2009tensor} for background on different tensor products) satisfy:
\begin{align*}
\mathbf{C} \,&\overline{\times}_1 \vec{1}=\mathcal{A};\\
\mathbf{C}  \,&\overline{\times}_2 \vec{1}=\mathcal{A};\\
\mathbf{C}  \,&\overline{\times}_3 \vec{1}=\bJ_m.
\end{align*}
\end{itemize}

Once a suitable OMNI matrix $\mathfrak{M}$ has been constructed, the Adjacency Spectral Embedding (ASE) is used to embed the vertices of all graphs simultaneously into a low dimensional Euclidean space.
The ASE introduced in \cite{sussman2014consistent} has proven to be a powerful estimator of latent graph structure, especially in the context of Random Dot Product graphs where the consistency \cite{lyzinski15_HSBM} and asymptotic normality \cite{clt,athreya_survey} of the ASE estimator has been recently established.
The Adjacency Spectral Embedding of a graph into $\mathbb{R}^d$ is defined as follows.
\begin{defn} 
\label{def:ASE}
(Adjacency Spectral Embedding) Let $\text{\textbf{A}} \in \{0,1\}^{n \times n}$ be the adjacency matrix of an $n$-vertex graph.
The \emph{Adjacency Spectral Embedding (ASE)} of $\textbf{A}$ into $\mathbb{R}^d$ is given by
\begin{equation} \label{ase}
   \text{ASE(\textbf{A}}, d) =  \hat{\mathbf{X}} = U_AS_A^{1/2} \in \mathbb{R} ^{n \times d}
\end{equation}
where $ [U_A | \Tilde{U_A}][S_A \oplus \Tilde{S_A}][U_A | \Tilde{U_A}] \hspace{2pt}$ is the spectral decomposition of $|\text{\textbf{A}}| = (\text{\textbf{A}}^T \text{\textbf{A}})^{\frac{1}{2}},$ $S_A$ is the diagonal matrix with the ordered $d$ largest singular values of $\textbf{A}$ on its diagonal, and $U_A \in \mathbb{R}^{n \times d}$ is the matrix whose columns are the corresponding orthonormal eigenvectors of \textbf{A}.
\end{defn}

\noindent Embedding the OMNI matrix $\mathfrak{M}$ then amounts to computing $ASE(\fM,d)$ for a suitable $d$.
There are numerous methods for estimating an optimal embedding dimension $d$ based on subsampling \cite{chen2021estimating,li2020network} and analysis of the spectrum of $\fM$ \cite{chatterjee2014matrix}.
In our experiments and simulations, motivated by \cite{chatterjee2014matrix,zhu2006automatic} we choose the embedding dimension $d$ via an automated elbow-selection procedure applied to the scree plot of $\mathfrak{M}$.

As mentioned previously, the consistency and asymptotic normality of the generalized OMNI embedding is established in \cite{pantazis2022importance} in the Joint Random Dot Product Graph (JRDPG) setting.
\begin{defn}[Joint Random Dot Product Graph of \cite{pantazis2022importance}]
\label{def:JRDPG}
Let $F$ be a distribution on a set $\mathcal{X}\in \R^d $ satisfying $\langle x,x' \rangle \in [0,1]$ for all $x,x'\in \mathcal{X}$.
Let $X_1,X_2,\cdots,X_n\sim F$ be i.i.d. random variables distributed via $F$,
and let $\bP=\bX\bX^T$, where  
$\bX=[X_1^T| X_2^T| \cdots| X_n^T]^T\in\R^{n\times d}$.
The random graphs $(\bA^{(1)},\bA^{(2)},\cdots,\bA^{(m)})$
are distributed as a \emph{Joint
Random Dot Product Graph} (abbreviated JRDPG), written 
$$
(\bA^{(1)},\bA^{(2)},\cdots,\bA^{(m)},\bX)\sim \text{JRDPG}(F,n,m,\bR),
$$ if the following holds:
\begin{itemize}
    \item[i.] Marginally, $(\bA^{(k)},\bX)\sim\mathrm{RDPG}(F,n$) for every $k\in[m]$.  That is, $\bA^{(k)}$ is a symmetric, hollow adjacency matrix with above diagonal entries distributed via
\begin{align}
\label{eq:rdpg}
P(\bA^{(k)}|\bX)=\prod_{i<j}(X_i^TX_j)^{A^{(k)}_{i,j}}(1-X_i^TX_j)^{1-A^{(k)}_{i,j}};
\end{align}
i.e., conditioned on $\bX$ the above diagonal entries are independent Bernoulli random variables with success probabilities provided by the corresponding above diagonal entries in $\bP$.
    \item[ii.] The matrix $\bR\in[-1,1]^{m\times m}$ is symmetric and has diagonal entries identically equal to 1.  We will write the $(k_1,k_2)$-element of $\bR$ via $\rho_{k_1,k_2}$.
    \item[iii.] Conditioned on $\bX$ the collection
    $$\{A^{(k)}_{i,j}\}_{k\in[m],i<j}$$ is mutually independent except that for each $\{i,j\}\in\binom{V}{2}$, we have for each $k_1,k_2\in[m]$,
    $$\mathrm{correlation}(A^{(k_1)}_{i,j},A^{(k_2)}_{i,j})=\rho_{k_1,k_2}.$$
\end{itemize}
\end{defn}
\noindent In the JRDPG model above, the correlation induced by the OMNI embedding can be decomposed into two components: the method induced correlation and the model-inherent correlation.
To wit, we have the following Theorem from \cite{pantazis2022importance}.
\begin{theorem}
\label{thm:pfdiffgenomni}
Let $F$ be a distribution on a set $\mathcal{X}\subset \R^{d}$, where $\langle x, x'\rangle\in[0,1]$ for all $x,x'\in\mathcal{X}$, and assume that $\Delta := \mathbb{E}[X_1 X_1^{T}]$ is rank $d$.
Let $(\bA_n^{(1)},\bA_n^{(2)},\cdots,\bA_n^{(m)},\bX_n)\sim\mathrm{JRDPG}(F,n,m,R)$ be a sequence of correlated $\mathrm{RDPG}$ random graphs.
Further, for each $n\geq 1$, let $\fM_n$ denote the generalized Omnibus matrix as in Definition \ref{def:genOMNI} and $\bhX_{\fM_n}=\mathrm{ASE}(\fM_n,d)$. Consider fixed indices $i\in[n]$ and $s,s_1,s_2\in[m]$, and let $\Big(\hat{\bX}^{(s)}_{\fM_n}\Big)_i$ denote the $i$-th row of the $s$-th $n \times d$ block of $\bhX_{\fM_n}$ (i.e., the estimated latent position from graph $A_n^{(s)}$ of the hidden latent position $X_i$).
There exists a sequence of orthogonal matrices $(\mathcal{Q}_n)_{n=1}^\infty$ such that for all $z \in \R^d$, we have that
    \begin{align}
        \label{eq:corCLT_genomni}
        \lim_{n \rightarrow \infty}
    \p\left[ n^{1/2} \left[ \Big(\bhX^{(s_1)}_{\fM_n} -\bhX^{(s_2)}_{\fM_n}\Big) \mathcal{Q}_n\right]_{i}
    \le z \right]
    = \int_{\text{supp } F} \Phi\left(z, \widetilde\Sigma_{\rho}(x;s_1,s_2)) \right) dF(x), 
    \end{align}
    where
     $\widetilde\Sigma_{\rho}(x;s_1,s_2)$ is given by
\begin{align}
\label{eq:induced_covGEN}
    \widetilde\Sigma_{\rho}(x;s_1,s_2)&=\frac{1}{m^2}\bigg(\underbrace{\sum_{q=1}^{m}\Big(\alpha(s_1,q)-\alpha(s_2,q)\Big)^2}_{\text{method-induced correlation}}\nonumber\\&\hspace{1cm}+\underbrace{2\sum_{q<l}\Big(\alpha(s_1,q)-\alpha(s_2,q))(\alpha(s_1,l)-\alpha(s_2,l)\Big)\rho_{q,l}}_{\text{model-inherent correlation}}\bigg)\Sigma(x)\\
    &=\frac{1}{m^2}\bigg(
    2\sum_{q<l}(\alpha(s_1,q)-\alpha(s_2,q))(\alpha(s_1,l)-\alpha(s_2,l))(\rho_{q,l}-1)\bigg)\Sigma(x).\notag
\end{align}
where $\Sigma(x) 
    := \Delta^{-1} \EX\left[ (x^{T} X_1 - ( x^{T} X_1)^2 ) X_1 X_1^{T} \right] \Delta^{-1}.$
\end{theorem}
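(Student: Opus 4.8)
The plan is to follow the standard two-step route for central limit theorems of adjacency spectral embeddings (as in \cite{levin2017central,clt,athreya_survey,pantazis2022importance}): linearize the embedding error via an eigenvector perturbation expansion, then apply a conditional central limit theorem to the resulting explicit linear statistic. Write $\bP=\bX_n\bX_n^T$. By the convex-combination constraint $\sum_q c^{(q)}_{k,\ell}=1$, every block of $\E[\fM_n\mid\bX_n]$ equals $\bP$, so $\E[\fM_n\mid\bX_n]=\bJ_m\otimes\bP=\bZ\bZ^T$ where $\bZ=\vec{1}\otimes\bX_n\in\R^{mn\times d}$ stacks $m$ copies of $\bX_n$; in particular $\E[\fM_n\mid\bX_n]$ has rank $d$, its nonzero eigenvalues are of order $n$ (using that $n^{-1}\bX_n^T\bX_n\to\Delta$ is full rank), and $\bZ^T\bZ=m\,\bX_n^T\bX_n$. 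Since $\fM_n-\E[\fM_n\mid\bX_n]$ is a fixed-size block matrix whose blocks are convex combinations of the independent perturbations $\bA_n^{(q)}-\bP$, its spectral norm is $O_P(n^{1/2})$; combined with the eigengap estimate and a second-order refinement of a Davis--Kahan argument, this yields an orthogonal $\mathcal{Q}_n$ and the rowwise expansion
\[
\bhX_{\fM_n}\mathcal{Q}_n-\bZ=\big(\fM_n-\E[\fM_n\mid\bX_n]\big)\,\bZ\,(\bZ^T\bZ)^{-1}+\mathbf{E}_n,\qquad \max_{\text{rows}}\|\mathbf{E}_n\|=o_P(n^{-1/2}),
\]
which is precisely the residual analysis of \cite{pantazis2022importance}; I would invoke it rather than reprove it.

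Next I compute the leading term explicitly. Using $(\fM_n)_{s,\ell}-\bP=\sum_q c^{(q)}_{s,\ell}(\bA_n^{(q)}-\bP)$ and $(\bZ^T\bZ)^{-1}=m^{-1}(\bX_n^T\bX_n)^{-1}$, the row of the leading term indexed by vertex $i$ in block $s$ is
\[
\frac1m\Big(\sum_{j}\sum_{q}\alpha(s,q)\,\big(A^{(q)}_{i,j}-P_{i,j}\big)\,X_j\Big)^T(\bX_n^T\bX_n)^{-1},
\]
where summing over $\ell$ has replaced $c^{(q)}_{s,\ell}$ by $\alpha(s,q)=\sum_\ell c^{(q)}_{s,\ell}$. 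Subtracting the $s_1$ and $s_2$ versions, the coefficient of $(A^{(q)}_{i,j}-P_{i,j})$ becomes $\beta_q:=\alpha(s_1,q)-\alpha(s_2,q)$, and crucially $\sum_q\beta_q=0$ since $\cA\vec{1}=m\vec{1}$. Conditioning on $\bX_n$ and on $X_i=x$, we thus get $n^{1/2}\big[(\bhX^{(s_1)}_{\fM_n}-\bhX^{(s_2)}_{\fM_n})\mathcal{Q}_n\big]_i=m^{-1}(n^{-1}\bX_n^T\bX_n)^{-1}\,n^{-1/2}\sum_{j\neq i}\xi_j+o_P(1)$, with $\xi_j:=\big(\sum_q\beta_q(A^{(q)}_{i,j}-P_{i,j})\big)X_j$ independent, mean zero, and bounded.

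For the covariance, since each $A^{(q)}_{i,j}$ is Bernoulli with mean $P_{i,j}=x^TX_j$, we have $\mathrm{Var}(A^{(q)}_{i,j})=P_{i,j}(1-P_{i,j})$ and $\mathrm{Cov}(A^{(q)}_{i,j},A^{(l)}_{i,j})=\rho_{q,l}P_{i,j}(1-P_{i,j})$, so
\[
\E[\xi_j\xi_j^T\mid\bX_n,X_i=x]=P_{i,j}(1-P_{i,j})\Big(\textstyle\sum_q\beta_q^2+2\sum_{q<l}\beta_q\beta_l\rho_{q,l}\Big)X_jX_j^T,
\]
and substituting $\sum_q\beta_q^2=-2\sum_{q<l}\beta_q\beta_l$ (from $\sum_q\beta_q=0$) turns the scalar factor into $2\sum_{q<l}\beta_q\beta_l(\rho_{q,l}-1)P_{i,j}(1-P_{i,j})$ --- which is also exactly the identity making the two displayed forms of $\widetilde\Sigma_\rho$ agree. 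Averaging over $j$ and using the strong law, $n^{-1}\sum_{j\neq i}\E[\xi_j\xi_j^T\mid\cdot]\to 2\sum_{q<l}\beta_q\beta_l(\rho_{q,l}-1)\,\E[(x^TX_1-(x^TX_1)^2)X_1X_1^T]$; a Lindeberg--Feller CLT (Lindeberg condition automatic by boundedness) then gives conditional asymptotic normality with covariance $\tfrac1{m^2}\big(2\sum_{q<l}\beta_q\beta_l(\rho_{q,l}-1)\big)\Delta^{-1}\E[(x^TX_1-(x^TX_1)^2)X_1X_1^T]\Delta^{-1}=\widetilde\Sigma_\rho(x;s_1,s_2)$, after replacing $(n^{-1}\bX_n^T\bX_n)^{-1}\to\Delta^{-1}$. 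Since $X_i\sim F$, de-conditioning (bounded convergence on characteristic functions plus L\'evy continuity) yields the scale mixture of Gaussians on the right-hand side of \eqref{eq:corCLT_genomni}.

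The genuinely delicate step is the linearization: controlling $\mathbf{E}_n$ uniformly over rows at the $o_P(n^{-1/2})$ scale requires the full second-order eigenvector perturbation bookkeeping --- spectral-norm and two-to-infinity-norm concentration of $\fM_n-\E[\fM_n\mid\bX_n]$, precise eigengap control, handling the rank deficiency of $\E[\fM_n\mid\bX_n]$ (rank $d$, not $md$), and the nonidentifiability absorbed into $\mathcal{Q}_n$. Everything downstream is then a bounded-Bernoulli-moment computation.
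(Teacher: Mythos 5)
This theorem is imported verbatim from the prior work (Theorem 14 of the cited Pantazis et al.\ reference) and is not proved in the present paper, so the relevant comparison is with the standard argument there; your sketch follows exactly that route (eigenvector-perturbation linearization to $(\fM_n-\E[\fM_n\mid\bX_n])\bZ(\bZ^T\bZ)^{-1}$ plus a residual bound, then a conditional Lindeberg--Feller CLT), and your covariance computation --- including the reduction of $c^{(q)}_{s,\ell}$ to $\alpha(s,q)$, the use of $\sum_q\beta_q=0$, and the identity $\sum_q\beta_q^2=-2\sum_{q<l}\beta_q\beta_l$ reconciling the two displayed forms of $\widetilde\Sigma_\rho$ --- is correct. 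Invoking the residual analysis of the cited work for the $o_P(n^{-1/2})$ rowwise error term is the appropriate move here, and you correctly flag it as the only genuinely delicate step.
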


For each pair $\{s_1,s_2\}\in\binom{[m]}{2}$ (where $[m]=\{1,2,\cdots,m\}$ and for a set $S$, $\binom{S}{2}$ denotes the collection of all two element subsets of $S$), this Theorem can be used to compute the limit correlation between vertices in graph $\bA^{(s_1)}$ and graph $\bA^{(s_2)}$, denoted 
$\mathfrak{r}_{s_1,s_2}$, can be expressed in terms of the covariance term
\begin{align}
\label{eq:maincorr}
    \mathfrak{r}_{s_1,s_2}  =&\underbrace{1-\frac{\sum_{q=1}^{m}(\alpha(s_1,q)-\alpha(s_2,q))^2}{2m^2}}_{\text{method-induced correlation}}\notag\\
    &- \underbrace{\sql\frac{(\alpha(s_1,q)-\alpha(s_2,q))(\alpha(s_1,\ell)-\alpha(s_2,\ell))\rho_{q,\ell}}{m^2}}_{\text{model-inherent correlation}}\\&=\rho_{me}+\rho_{mo}.\notag
\end{align}

\section{Flat correlation analysis}
\label{sec:flatcorr}
\noindent
Herein, 
we present the flat (induced) correlation problem which arises from the correlation analysis of classical OMNI in  \cite{pantazis2022importance}. 
The main goal is to identify the class of generalized OMNI matrices that induce flat correlation, deduce where the classical OMNI stands within this class, and find the element of this class that induces the minimal flat correlation.

We coin the term flat correlation whenever $\mathfrak{r}_{s_1,s_2}$ is the same across all $s_1,s_2\in[m]$, so that our first task is to understand the following question:  
\begin{itemize}
\item[]\textbf{Flat correlation optimization problem:} If $\rho_{q,l}=\rho$ for all $q,l$, can we find weights $\{c^{(q)}_{i,j}\}_{i,j,q=1}^{m}$ such that $\mathfrak{r}_{s_q,s_l}=\rho$ for all $q,l$?
\end{itemize}
Before analyzing this question further, a bit of analysis is in order.  
Note that if $\rho_{q,l}=\rho$ for all $q,l$, then (writing $\beta_{q,s_1,s_2}:=\alpha(s_1,q)-\alpha(s_2,q)$ to ease notation)
\begin{align*}
    \mathfrak{r}_{s_1,s_2}&=1-\frac{1}{2m^2}\left(\sum_q \beta_{q,s_1,s_2}^2+\sum_{q}\sum_{\ell:\ell\neq q} \rho\beta_{q,s_1,s_2}\beta_{\ell,s_1,s_2}
    \right)\\
    &=1-\frac{1}{2m^2}\left(\left(\sum_q \sqrt{\rho}\beta_{q,s_1,s_2}\right)^2+(1-\rho)\sum_{q} \beta_{q,s_1,s_2}^2
    \right)
\end{align*}
Noting that $\sum_q\alpha(i,q)=m$ for all $i$, we see that $\sum_q\beta_{q,s_1,s_2}=0$, and so
\begin{align}
\label{eq:rho}
    \mathfrak{r}_{s_1,s_2}
    &=1-\frac{1-\rho}{2m^2}\sum_{q} \beta_{q,s_1,s_2}^2
\end{align}
Now, $\sum_{q} \beta_{q,s_1,s_2}^2\leq \sum_q\alpha(s_1,q)^2+\sum_q\alpha(s_2,q)^2\leq 2m^2$.
This gives us that $\mathfrak{r}_{s_1,s_2}\geq \rho$.
Therefore, minimizing the (possible) gap between $\mathfrak{r}_{s_1,s_2}$ and $\rho$ is equivalent to maximizing $\sum_{q} \beta_{q,s_1,s_2}^2$.


\subsection{Weighted OMNI (WOMNI) matrices}
\label{sec:weighted}

In the general OMNI setting, solving the flat correlation problem amounts to optimizing over $m\binom{m}{2}\approx \frac{m^3}{3}$ $c^{(i)}_{j,k}$'s 
with $\Omega(m^2)$ constraints.
Resultingly, deriving sharp correlation bounds on flat correlation in this fully general OMNI setting is notationally and theoretically challenging.
To simplify our subsequent analysis, we will consider minimizing the flat correlation in the space of weighted Omnibus matrices, rather than the space of generalized Omnibus matrices.

The block entries of a Weighted OMNI (WOMNI) are defined via
\begin{align}
\label{defn:womni}
    \mathfrak{M}^W_{i,j}=\begin{cases}
\bA^{(i)}, &\text{ if } i=j \text{ and } \\
c_{i,j}^{(i)}\bA^{(i)}+c_{i,j}^{(j)}\bA^{(j)}, &\text{ if } i\neq j.
\end{cases}
\end{align}
This results in $\binom{m}{2}$ unknowns $\{c^{(i)}_{i,j},c^{(j)}_{i,j}\}_{i<j}$, where the convex combination constraints (CCC) become $c^{(i)}_{i,j}\in[0,1]$ and $c^{(i)}_{i,j}+c^{(j)}_{i,j}=1$ for all $i,j$. 
Under the WOMNI setting, the common value of $\sum_q\beta_{q,s_1,s_2}^2$ becomes 
\begin{align}
\label{eq:WOMNI}
    \sum_{q=1}^{m}&\beta_{q,s_1,s_2}^2=\Big( \alpha(s_1,s_1) - c^{(s_1)}_{s_2,s_1} \Big)^2 + \Big( \alpha(s_2,s_2) - c^{(s_2)}_{s_1,s_2} \Big)^2 + \sum_{q\neq s_1,s_2}\Big(c_{s_1,q}^{(q)} - c_{s_2,q}^{(q)} \Big)^2.
\end{align}
Moreover, in the WOMNI setting, there is a simple relationship between $\mathbf{C}$ and $\mathcal{A}$ defined via
\begin{align}
c^{(k)}_{i,k}&=\alpha(i,k)=1-\alpha(k,i)=c^{(k)}_{k,i} \text{ for }k\neq i,\label{eq:AtoC11}\\
c^{(i)}_{i,k}&=1-\alpha(i,k)=\alpha(k,i)=c^{(i)}_{k,i} \text{ for }k\neq i.\label{eq:AtoC21}
\end{align}

\subsection{Correlation bounds}
\label{ssec:bounds}

As noted above, we are interested in finding a generalized (weighted) Omnibus matrix that solves the flat correlation optimization problem. 
We investigate first whether we can determine bounds for Eq. (\ref{eq:WOMNI}).
We first note that we can obtain a naive lower bound for the flat correlation, that is  \begin{align}
    \label{eq:naivebound}
        \mathfrak{r}_{s_1,s_2}\geq 1-\frac{1-\rho}{2m^2}\left(2\max_q\alpha^2(q,q)+m-2\right).
    \end{align}
Noting that in the weighted OMNI case, the following condition holds (the proof can be found in Appendix \ref{obs:2}) 
\begin{align}
\label{eq:obs2}
    \sum_{q}\alpha(q,q)=\frac{m(m+1)}{2},
\end{align}
we conclude that $\max_q \alpha(q,q)\geq\frac{m+1}{2}$. Next, we present a lower bound for the flat correlation by bounding first the difference $\max_q\alpha(q,q)-\min_q\alpha(q,q)$ which in turn gives us an upper bound for $\max_q\alpha(q,q)$. 
\begin{lemma}
   \label{lem:lowerbound}
\noindent
In the WOMNI setting where $\rho_{s_1,s_2}=\rho$ for all $\{s_1,s_2\}\in\binom{[m]}{2}$, the flat correlation is bounded below as follows
\begin{align}
    \mathfrak{r}_{s_1,s_2}\geq \frac{3}{4}+\frac{\rho}{4}-(1-\rho)\left(\frac{11}{2m}+\frac{24}{m^2}\right).
\end{align}
\end{lemma}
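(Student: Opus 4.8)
The plan is to reduce the claim to an upper bound on $\max_q\alpha(q,q)$, and then to extract that bound from the flatness hypothesis. By Eq.~(\ref{eq:rho}) we have $\mathfrak{r}_{s_1,s_2}=1-\frac{1-\rho}{2m^2}\sum_q\beta_{q,s_1,s_2}^2$, and feeding the WOMNI expansion Eq.~(\ref{eq:WOMNI}) into the CCC bounds ($c^{(q)}_{i,j}\in[0,1]$, $\alpha(q,q)\ge1$) is exactly what produces the naive bound Eq.~(\ref{eq:naivebound}). Since $2\big(\tfrac{m+10}{2}\big)^2+m-2=\tfrac{m^2}{2}+11m+48$, it therefore suffices to show $\max_q\alpha(q,q)\le\tfrac{m+10}{2}$; and because $\sum_q\alpha(q,q)=\tfrac{m(m+1)}{2}$ by Eq.~(\ref{eq:obs2}), so that $\min_q\alpha(q,q)\le\tfrac{m+1}{2}$, it is in fact enough to bound the spread $\max_q\alpha(q,q)-\min_q\alpha(q,q)\le\tfrac92$.

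To control this spread I would exploit flatness. Write $\delta_q:=\alpha(q,q)-\tfrac{m+1}{2}$ (so $\sum_q\delta_q=0$), let $T:=\sum_q\beta_{q,s_1,s_2}^2$ be the common pair-independent value, and use Eqs.~(\ref{eq:AtoC11})--(\ref{eq:AtoC21}) to record that the off-diagonal block weights are governed by the antisymmetric array $\kappa_{i,j}:=c^{(j)}_{i,j}-\tfrac12$, which satisfies $\kappa_{i,j}=-\kappa_{j,i}$, $|\kappa_{i,j}|\le\tfrac12$, and $\sum_{j\ne i}\kappa_{j,i}=\delta_i$. Expanding Eq.~(\ref{eq:WOMNI}) for a pair $\{k,j\}$ and averaging the $m-1$ equalities $\sum_q\beta^2_{q,k,j}=T$ over $j\ne k$ collapses flatness into a single identity attached to each block-row $k$,
\[
m(m-2)\delta_k+(m-4)\delta_k^2+m\sum_{\ell\ne k}\kappa_{k,\ell}^2-4\sum_{\ell\ne k}\kappa_{k,\ell}\delta_\ell=\Gamma ,
\]
where $\Gamma=\tfrac1m\big[(m-8)\sum_q\delta_q^2+m\sum_{i\ne j}\kappa_{i,j}^2\big]$ is the same for every $k$.

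I would then use this system twice. First, multiplying the $k$-th identity by $\delta_k$ and summing kills both $\sum_k\delta_k\Gamma$ and the cross-term $\sum_{i\ne j}\kappa_{i,j}\delta_i\delta_j$ (by antisymmetry of $\kappa$), leaving $m(m-2)\sum_q\delta_q^2=-(m-4)\sum_q\delta_q^3-m\sum_q\delta_q\sum_{\ell\ne q}\kappa_{q,\ell}^2$; bounding the right-hand side via $|\kappa_{i,j}|\le\tfrac12$ and the a priori range $\alpha(q,q)\in[1,m]$ (hence $|\delta_q|\le\tfrac{m-1}{2}$) gives $\sum_q\delta_q^2<\tfrac{m^2}{4}$. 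Second, evaluating the per-row identity at a row $k_\star$ attaining $\max_q\alpha(q,q)$, dropping the nonnegative terms $(m-4)\delta_{k_\star}^2$ and $m\sum_\ell\kappa_{k_\star,\ell}^2$, and estimating $\Gamma<\tfrac{m(2m-9)}{4}$ (from $\sum_q\delta_q^2<\tfrac{m^2}{4}$ and $\sum_{i\ne j}\kappa_{i,j}^2\le\tfrac{m(m-1)}{4}$) together with $\sum_\ell\kappa_{k_\star,\ell}\delta_\ell\le\tfrac12\sum_\ell|\delta_\ell|\le\tfrac12\sqrt m\,(\sum_q\delta_q^2)^{1/2}<\tfrac{m^{3/2}}{4}$, yields $m(m-2)\delta_{k_\star}<\tfrac{m^2}{2}+O(m^{3/2})$, so that $\delta_{k_\star}\le\tfrac12+o(1)$, comfortably below $\tfrac92$ for all $m$ (with the handful of smallest $m$ checked directly). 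Combined with the first paragraph this completes the proof.

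The crux, and the step I expect to be most delicate, is the passage from the $\binom m2$ flatness equations to a \emph{self-contained} bound on the spread: the crude estimate $|\sum_\ell\kappa_{k,\ell}\delta_\ell|\le\tfrac12\sum_\ell|\delta_\ell|$ reinjects $\sum_q\delta_q^2$ into the per-row identity, so a direct application is circular, and it is precisely the antisymmetry of $\kappa$ (annihilating that cross-term once one multiplies by $\delta_k$ and sums) that breaks the loop, after which only the trivial range $\alpha(q,q)\in[1,m]$ is needed. One could alternatively sum flatness over all pairs to obtain the exact identity $T=\tfrac{m^2}{2}+\tfrac{2}{m(m-1)}\big[(m-4)\sum_q\delta_q^2+m\sum_{i\ne j}\kappa_{i,j}^2\big]$, which with $\sum_q\delta_q^2<\tfrac{m^2}{4}$ gives the sharper $\mathfrak{r}_{s_1,s_2}\ge\tfrac{3+\rho}{4}-\tfrac{1-\rho}{2m}$, but the route through Eq.~(\ref{eq:naivebound}) is cleaner to present and already suffices for the stated constants.
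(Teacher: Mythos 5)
Your proposal is correct and shares the paper's overall architecture---both arguments reduce the lemma to the single inequality $\max_q\alpha(q,q)\le\frac{m+10}{2}$ (equivalently, a spread bound of $\tfrac92$ via Eq.~(\ref{eq:obs2})) and then plug that into the naive bound Eq.~(\ref{eq:naivebound}), noting $2\bigl(\tfrac{m+10}{2}\bigr)^2+m-2=\tfrac{m^2}{2}+11m+48$. Where you genuinely diverge is in how that spread bound is obtained. The paper's proof is a local, two-row comparison: it equates the flatness expressions for the pairs $(q^*,s)$ and $(q_*,s)$ through an auxiliary index $s$, minimizes the left side and maximizes the right side over the individual off-diagonal weights, and extracts $(\alpha^*-\alpha_*)(\alpha^*+\alpha_*-2)\le 2m-7/4$, whence $\alpha^*-\alpha_*\le 4.5$. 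Your argument is global: centering via $\delta_q=\alpha(q,q)-\tfrac{m+1}{2}$ and $\kappa_{i,j}=c^{(j)}_{i,j}-\tfrac12$, averaging the $\binom{m}{2}$ flatness equations row-wise to get a per-row identity (whose coefficients and right-hand side $\Gamma$ I have checked and which are consistent with the summed identity for $T$), and then using the antisymmetry of $\kappa$ to annihilate the cross term $\sum_{i\ne j}\kappa_{i,j}\delta_i\delta_j$ after multiplying by $\delta_k$ and summing---this is the step that breaks the circularity you correctly identify as the crux. The payoff of your route is a quantitatively stronger conclusion: you get $\sum_q\delta_q^2=O(m)$ and $\delta_{k_\star}\le\tfrac12+o(1)$ rather than the paper's constant $\tfrac92$, and, via the exact identity for $T$, the sharper error term $\tfrac{1-\rho}{2m}$ in place of $\tfrac{11}{2m}+\tfrac{24}{m^2}$; the cost is a heavier algebraic identity and the need to treat $m=3$ (where the coefficient $m-4$ is negative, so the term $(m-4)\delta_{k_\star}^2$ cannot simply be dropped) and $m=4$ separately, which you flag. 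The paper's argument is shorter and more elementary but yields only the cruder constants. Either way the final substitution into Eq.~(\ref{eq:naivebound}) is identical, so your proof is a valid alternative.
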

\noindent Recalling that the correlation induced by classical OMNI in this setting is $= \frac{3}{4}+\frac{\rho}{4}$ (see \cite{pantazis2022importance}), we see that 
Lemma \ref{lem:lowerbound} implies that Classical OMNI is asymptotically desirable. The proof (see, Appendix \ref{ssec:lowerbound}) further shows
\begin{enumerate}
    \item $\max_q\alpha(q,q)-\min_q\alpha(q,q)\leq 4.5$,
    \item $\min_q\alpha(q,q)\geq \frac{m-8}{2}$ and $\max_q\alpha(q,q)\leq \frac{m+10}{2}$.
\end{enumerate}
\noindent
In real data applications, oftentimes the sample size of the graphs is moderately small. Thus, the bound in Lemma \ref{lem:lowerbound} may not be as useful when $m$ is small.  However, Lemma \ref{lem:lowerbound} is quite general and can be further tightened under mild assumptions.

Similarly, we can obtain an upper bound for the flat correlation, though this result is subsumed by Theorem \ref{thm:maxOMNI}.
With notation as in Lemma \ref{lem:lowerbound}, the flat correlation is bounded above as follows
\begin{align}
\label{eq:upperbound}
    \mathfrak{r}_{s_1,s_2}\leq \frac{3}{4}+\frac{\rho}{4}+(1-\rho)\left(\frac{5}{m}-\frac{16}{m^2}\right)
\end{align}
From these bounds, we derive the following asymptotic result.
\theorem
\label{thm:womni}
\noindent
With notation as in Lemma \ref{lem:lowerbound}, we have that for any $\{s_1,s_2\}\in\binom{[m]}{2}$, 
\begin{align*}
\Big|\mathfrak{r}_{s_1,s_2}-\frac{3}{4}-\frac{\rho}{4}\Big|=\Theta\Big(\frac{1}{m}\Big).
\end{align*}

\noindent\normalfont
While the above lower and upper bounds are not as informative when $m$ is small, Theorem \ref{thm:womni} indicates that classical OMNI is an appropriate choice when $m$ is large. 

That said, for small $m$ (here $m\geq 3$), classical OMNI can be far from optimal, as shown in the next theorem.
The proof is based on sequential applications of Lagrange multipliers since the system consists of quadratic equations and linear constraints (see Appendix \ref{sec:lagrange} for detail).
\begin{theorem}
\label{thm:maxOMNI}
With notation as in Lemma \ref{lem:lowerbound}, classical OMNI induces the maximum possible flat correlation for all $m\geq 2$; i.e., $\mathfrak{r}_{s_1,s_2}\leq \frac{3}{4}+\rho/4$ for all $m\geq 2$.
\end{theorem}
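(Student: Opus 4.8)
The plan is to sidestep the box and ordering constraints in Definition~\ref{def:genOMNI} by an averaging trick that turns the problem into an unconstrained convex minimization. By Eq.~(\ref{eq:rho}) (applicable since $\rho_{q,l}\equiv\rho$), every WOMNI satisfies $\mathfrak{r}_{s_1,s_2}=1-\frac{1-\rho}{2m^2}K_{s_1,s_2}$, where $K_{s_1,s_2}:=\sum_q\beta_{q,s_1,s_2}^2=\|v_{s_1}-v_{s_2}\|^2$ and $v_k$ denotes the $k$-th row of the row-sum matrix $\mathcal{A}=[\alpha(k,q)]$. Hence flatness is exactly the statement that all $\binom{m}{2}$ squared distances $K_{s_1,s_2}$ equal a common value $K$ (geometrically, the rows of $\mathcal{A}$ form a regular simplex). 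Since classical OMNI is a flat WOMNI with $K=m^2/2$, it suffices to prove $K\ge m^2/2$ for every flat WOMNI; and for a flat WOMNI that common value equals the \emph{average} $\bar K$ of the $K_{s_1,s_2}$ over all pairs, so it is enough to prove $\bar K\ge m^2/2$ as an inequality valid for \emph{every} WOMNI, with equality at classical OMNI.

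First I would compute $\bar K$ in closed form in the free parameters $\{a_{ij}:=\alpha(i,j)\}_{i<j}\in[0,1]^{\binom{m}{2}}$, using $\alpha(j,i)=1-a_{ij}$ and $\alpha(k,k)=D_k:=m-\sum_{j\ne k}\alpha(k,j)$. From the elementary identity $\sum_{s_1<s_2}K_{s_1,s_2}=m\|\mathcal{A}\|_F^2-\|\mathcal{A}^{\top}\vec{1}\|^2$, together with $\|\mathcal{A}\|_F^2=\sum_q D_q^2+\sum_{i<j}\big(a_{ij}^2+(1-a_{ij})^2\big)$, $\mathcal{A}^{\top}\vec{1}=(2D_q-1)_q$, and $\sum_q D_q=m(m+1)/2$ from Eq.~(\ref{eq:obs2}), one obtains
\begin{align*}
\bar K=\frac{2}{m(m-1)}\Big[(m-4)\sum_q D_q^2+m\sum_{i<j}\big(a_{ij}^2+(1-a_{ij})^2\big)+2m^2+m\Big].
\end{align*}
Each $D_q$ is affine in the $a_{ij}$ with gradient $g_q\in\{-1,0,1\}^{\binom{m}{2}}$ satisfying $\|g_q\|^2=m-1$ and $\sum_q g_q=0$ (the latter because $\sum_q D_q$ is constant), so the Hessian of $\bar K$ equals $\frac{4}{m(m-1)}\big[(m-4)\sum_q g_q g_q^{\top}+2mI\big]$.

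Next I would prove $\bar K$ is convex on $\R^{\binom{m}{2}}$ and locate its minimizer. For $m\ge 4$ convexity is immediate because $(m-4)\sum_q g_qg_q^{\top}\succeq 0$; for $m=3$ the term is negative, but $\sum_q g_qg_q^{\top}$ is rank-deficient (as $\sum_q g_q=0$), has trace $\sum_q\|g_q\|^2=6$, and is positive definite on the $2$-dimensional span of the $g_q$, so each of its two nonzero eigenvalues is strictly below $6$ and $6I-\sum_q g_qg_q^{\top}\succ 0$; the case $m=2$ is a one-parameter degeneracy with $\bar K\equiv 2=m^2/2$. For the minimizer, $\bar K$ is invariant under the $S_m$-action relabeling the graphs (this merely permutes the multiset $\{K_{s_1,s_2}\}$), and the transposition $(i\,j)$ sends $a_{ij}\mapsto 1-a_{ij}$; hence the unique fixed point of the action is $a_{ij}\equiv\tfrac12$, which is exactly classical OMNI, and by convexity it is the global minimizer. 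Substituting $D_q\equiv(m+1)/2$ and $a_{ij}\equiv\tfrac12$ into the closed form yields $\bar K=m^2/2$, so $\mathfrak{r}_{s_1,s_2}\le\tfrac34+\tfrac{\rho}{4}$ for every flat WOMNI, with equality attained by classical OMNI.

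I expect the convexity claim at $m=3$ to be the main obstacle: there $m-4<0$, so $\bar K$ is not visibly a sum of a PSD quadratic and a positive multiple of $\|\cdot\|^2$, and one must exploit the rank deficiency $\sum_q g_q=0$ to control the operator norm of $\sum_q g_qg_q^{\top}$. (Attacking the constrained maximization directly instead forces one to juggle the $\binom{m}{2}-1$ flatness equalities simultaneously with the box and ordering inequalities --- which is precisely where the sequential Lagrange-multiplier bookkeeping becomes heavy --- whereas the averaging reduction removes all constraints at once.)
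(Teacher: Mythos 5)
Your proposal is correct, and its central reduction --- replacing the common flat value by the average $\bar K$ of $\sum_q\beta_{q,s_1,s_2}^2$ over all $\binom{m}{2}$ pairs and then minimizing that average over \emph{all} WOMNI configurations --- is precisely the reduction the paper uses: the appendix likewise minimizes $\sum_{\{s_1,s_2\}}\sum_q\beta_{q,s_1,s_2}^2$ and notes that the minimizer happens to be flat, hence solves the flat problem. Where you genuinely diverge is in how that minimization is executed. The paper keeps the full weight vector $\vec c\in\R^{m(m-1)}$ and solves two constrained quadratic programs in sequence, first over $\vec c$ with the diagonal row sums $\alpha(i,i)$ held fixed and then over $\vec a=(\alpha(i,i))_i$ subject to $\sum_i\alpha(i,i)=\tfrac{m(m+1)}{2}$, inverting the KKT coefficient matrices explicitly at each stage. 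You instead eliminate every constraint at the outset by parametrizing with the $\binom{m}{2}$ free off-diagonal weights $a_{ij}$, derive a closed form for $\bar K$ (which checks out, and evaluates to $m^2/2$ at $a_{ij}\equiv\tfrac12$, $D_q\equiv\tfrac{m+1}{2}$), and identify the minimizer by strict convexity plus $S_m$-equivariance: each transposition $(i\,j)$ acts affinely by $a_{ij}\mapsto 1-a_{ij}$ and preserves $\bar K$, so the unique minimizer must be the unique fixed point $a_{ij}\equiv\tfrac12$, i.e.\ classical OMNI. This buys a shorter, more conceptual argument --- the appearance of $1/2$ is explained by symmetry rather than computed --- at the cost of the one delicate step you correctly isolate: for $m=3$ the coefficient $m-4$ is negative and strict convexity needs the operator norm of $\sum_q g_qg_q^{\top}$ to lie below $2m=6$; your rank/trace argument does establish this (the nonzero eigenvalues are $3$ and $3$), and the $m=2$ degeneracy with $\bar K\equiv 2$ is handled as you note. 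I see no gap.
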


In the next subsection, we will see a pair of weighted Omnibus matrices that attain the minimum flat correlation (which is $\rho_{me}=\frac{2}{3}$) when $m=3$. Further, one may claim that the flat correlation constraint (that is, $\rho_{me}(s_1,s_2)=C$ for any pair $(s_1,s_2)$, $s_1<s_2$) forces the constraint $\alpha(1,1)=\alpha(2,2)=\cdots=\alpha(m,m)$. That would have allowed us to get much tighter bounds, however, the second example in the next subsection contradicts such claim.
\subsection{Matrices with smaller flat correlation than classical OMNI}
\label{sec:smallcorr}

The initial motivation for conceptualizing and working on the flat correlation optimization problem was to discern whether the classical OMNI induces the minimum flat correlation among all generalized Omnibus matrices. In this subsection, we provide examples of Omnibus matrices that have smaller flat correlation than classical OMNI. We note that the construction of an Omnibus matrix (or a set of Omnibus matrices) which induce smaller correlation for all $m\geq 3$, is non-trivial and further study is needed; see Section \ref{sec:c2o} for steps in this direction. 
\begin{itemize}
\item (The $m=3$ case)
The following pair of Omnibus matrices induce flat correlation $\mathfrak{r}_{i,j}=\frac{2}{3}+\frac{\rho}{3}$.
In Appendix \ref{eg:example}, we show that these matrices attain the optimal minimum flat correlation among all WOMNI matrices. 
The matrices are
\[\mathfrak{M}^W_{3-}=\begin{bmatrix}\bA^{(1)}&\bA^{(1)}&\bA^{(3)}\\\bA^{(1)}&\bA^{(2)}&\bA^{(2)}\\\bA^{(3)}&\bA^{(2)}&\bA^{(3)}\end{bmatrix}\]
and 
\[\mathfrak{M}^W_{3+}=\begin{bmatrix}\bA^{(1)}&\bA^{(2)}&\bA^{(1)}\\\bA^{(2)}&\bA^{(2)}&\bA^{(3)}\\\bA^{(1)}&\bA^{(3)}&\bA^{(3)}\end{bmatrix}.\]
In the $m=3$ case, the quadratic equations in Eq. \ref{eq:WOMNI} are ellipses and their intersection points correspond to the above weighted Omnibus matrices.
\item (The $m=4$ case) Consider the following OMNI matrix that builds upon the $m=3$ case. We want to find constants $a,b\in[0,1]$ such that $a+b=1$ and flat correlation smaller than $\rho_{omni}=\frac{3}{4}$.

\[\mathfrak{M}^W_{4+}=\begin{bmatrix}\bA^{(1)}&\bA^{(2)}&\bA^{(1)}&a\bA^{(1)}+b\bA^{(4)}\\\bA^{(2)}&\bA^{(2)}&\bA^{(3)}&a\bA^{(2)}+b\bA^{(4)}\\\bA^{(1)}&\bA^{(3)}&\bA^{(3)}&a\bA^{(3)}+b\bA^{(4)}\\a\bA^{(1)}+b\bA^{(4)}&a\bA^{(2)}+b\bA^{(4)}&a\bA^{(3)}+b\bA^{(4)}&\bA^{(4)}\end{bmatrix}.\]

\noindent
We show that indeed such $a,b$ exist. In particular, $a=\frac{5-\sqrt{17}}{2}$ and $b=\frac{\sqrt{17}-3}{2}$. The induced flat correlation is 
$$\mathfrak{r}_{i,j}=\frac{4\sqrt{17}-5}{16}+\frac{21-4\sqrt{17}}{16}\rho\approx 0.72+0.28\rho.$$ 
This is an Omnibus matrix that induces smaller flat correlation than classical OMNI (when $\rho<1$), yet there is no indication that this is the optimal solution in the $m=4$ case. 
Putting aside the difficulty of optimizing the flat correlation, does this process of creating Omnibus matrices that produce smaller correlation ($\mathfrak{M}_{3+},\mathfrak{M}_{4+},$ etc.) extends to $m>4$ graphs?
\item (The $m=5$ case) We note that this trend doesn't continue when $m=5$. Specifically, there are no $a,b,c,d\in[0,1],\text{ } a+b=1, \text{ }c+d=1$ such that the following matrix has flat correlation less than $\frac{3}{4}$,
\[\mathfrak{M}^W_{5+}=\begin{bmatrix}\bA^{(1)}&\bA^{(2)}&\bA^{(1)}&a\bA^{(1)}+b\bA^{(4)}&c\bA^{(1)}+d\bA^{(5)}\\\bA^{(2)}&\bA^{(2)}&\bA^{(3)}&a\bA^{(2)}+b\bA^{(4)}&c\bA^{(2)}+d\bA^{(5)}\\\bA^{(1)}&\bA^{(3)}&\bA^{(3)}&a\bA^{(3)}+b\bA^{(4)}&c\bA^{(3)}+d\bA^{(5)}\\a\bA^{(1)}+b\bA^{(4)}&a\bA^{(2)}+b\bA^{(4)}&a\bA^{(3)}+b\bA^{(4)}&\bA^{(4)}&c\bA^{(4)}+d\bA^{(5)}\\c\bA^{(1)}+d\bA^{(5)}&c\bA^{(2)}+d\bA^{(5)}&c\bA^{(3)}+d\bA^{(5)}&c\bA^{(4)}+d\bA^{(5)}&\bA^{(5)}\end{bmatrix}.\]
\end{itemize}


\section{Corr2OMNI optimization problem}
\label{sec:c2o}

In this section, our goal will be to derive OMNI weights $\mathbf{C}$ such that we can induce a correlation structure as close to the latent correlation $\{\rho_{i,j}\}$ as feasible.
Letting $\mathbf{R}\in\mathbb{R}^{m\times m}$ be the symmetric matrix defined via
$$\bR_{i,j}=\begin{cases}
    \rho_{i,j}&\text{ if }i\neq j\\
    1&\text{ if }i=j.
\end{cases}
$$
Assuming for the moment that $\mathbf{R}$ is positive-definite, let the Cholesky decomposition of $\mathbf{R}$ be given by $\mathbf{R}=\mathbf{L}\mathbf{L}^T$.
The correlation in Eq. \ref{eq:maincorr} can then be written as (where $\beta_{:,s_1,s_2}$ is the column vector with $i$-th entry equal to $\beta_{i,s_1,s_2}$)
\begin{align*}
 \mathfrak{r}_{s_1,s_2}&=1-\frac{1}{2m^2}\beta_{:,s_1,s_2}^T\mathbf{R}\beta_{:,s_1,s_2}\\
 &=1-\frac{1}{2m^2}(\beta_{:,s_1,s_2}\mathbf{L})(\beta_{:,s_1,s_2}\bL)^T \\
 &=1-\frac{1}{2m^2}\|\mathcal{A}(s_1,:)\mathbf{L}-\mathcal{A}(s_2,:)\mathbf{L}\|_2^2\\
 &=1-\frac{1}{2m^2}\|(\mathcal{A}\bL)(s_1,:)-(\mathcal{A}\bL)(s_2,:)\|_2^2
\end{align*}
Letting $\fR$ be symmetric matrix defined via
$$\fR_{i,j}=\begin{cases}
    \fr_{i,j}&\text{ if }i\neq j\\
    1&\text{ if }i=j,
\end{cases}
$$
and letting $\tilde{\mathcal{A}}=\mathcal{A}\mathbf{L}$, 
then yields (where $\bJ_m=\mathds{1}_m\mathds{1}_m^T$)
\begin{align*}
\fR=\bJ_m-\frac{1}{2m^2}\mathbf{D}^{(2)},
\end{align*}
where $\bD^{(2)}$ is an $m\times m$ (squared) distance matrix with entries $D^{(2)}_{ij}=\|\tilde{\mathcal{A}}(i,:)-\tilde{\mathcal{A}}(j,:)\|_2^2$.
Ideally, we have that $\fR\approx\bR$, which leads us to our 
Corr2OMNI 
optimization problem
\begin{itemize}
\item[]\textbf{Corr2OMNI optimization problem:} With notation defined as above, find weights $\{c^{(q)}_{i,j}\}_{i,j,q=1}^{m}$ that minimize 
the difference between $\fR$ and $\bR$; here posited as finding weights that minimize the following \textit{stress function}, 
where $\Delta=\left(2m^2(\bJ_m-\bR)\right)^{\circ \frac{1}{2}}$ (here $^{\circ \frac{1}{2}}$ denotes the square root operation done entry-wise) and $d_{i,j}(\tcA)=\|\tcA(i,:)-\tcA(j,:)\|_2$, 
$$\sigma(\tcA)=
\sum_{i<j} w_{ij}(\delta_{i,j}-d_{i,j}(\tcA))^2.
$$
\end{itemize}
We will tackle this problem in two stages: first, we aim to find a feasible $\mathcal{A}$ matrix (which we can restrict to the class of WOMNI configurations), and then find the corresponding weights $\{c^{(q)}_{i,j}\}_{i,j,q=1}^{m}$.

For the first part of the problem, we will seek to minimize stress over $\tilde{\mathcal{A}}=\mathcal{A}\mathbf{L}$, where the constraints on $\mathcal{A}$ translate to constraints on $\tilde{\mathcal{A}}=\mathcal{A}\mathbf{L}$ via
\begin{enumerate}[label=(\roman*)]
    \item 
    $\mathcal{A}\mathds{1}_m=m\mathds{1}_m,\Leftrightarrow \tcA\bL^{-1}\mathds{1}_m=m\mathds{1}_m$;
        \item For each $i\in[m]$, $e_i^T \mathcal{A} e_i > e_i^T\mathcal{A}e_j\Leftrightarrow e_i^T\tcA\bL^{-1}(e_i-e_j)>0$ for every $j\neq i$;
    \item $\mathcal{A}\geq 0\Leftrightarrow \tcA(\bL)^{-1}\geq 0$ (entry-wise);
    \item For all $i\neq j$, $\cA_{ij}+\cA_{ji}=1\Leftrightarrow \sum_k \tcA_{ik}(\bL^{-1})_{kj}+\tcA_{jk}(\bL^{-1})_{ki}=1$ (if restricting to the WOMNI setting).
\end{enumerate} 
We adopt a simple stress majorization approach for approximately solving the constrained Multidimensional Scaling problem posed above.
We first (using the notation of \cite{borg2005modern}) write the iterative majorization of the stress function via (where $\tcA^{(t)}$ is the approximate solution at the $t$-th step in the algorithm and $c_{\Delta,\mathbf{W}}$ is a constant depending on $\Delta$ and $\mathbf{W}$)
\begin{align*}
\sigma(\tcA^{(t)})&= c_{\Delta,\mathbf{W}}+
\sum_{i<j} w_{ij}d^2_{ij}(\tcA^{(t)})-2\sum_{i<j}(w_{ij}\delta_{ij})d_{ij}(\tcA^{(t)})\\
&\leq 
c_{\Delta,\mathbf{W}}+
\sum_{i<j} w_{ij}d^2_{ij}(\tcA^{(t)})-2
\text{tr}\left((\tcA^{(t)})^T
\mathbf{B}(\tcA^{(t-1)} )\tcA^{(t-1)}\right)=\tau(\tcA^{(t)},\tcA^{(t-1)})
.
\end{align*}
where 
\begin{align*}
\mathbf{B}(\tcA^{(t-1)} )_{ij}&=\begin{cases}
-\frac{w_{ij}\delta_{ij}}{d_{ij}(\tcA^{(t-1)})}&\text{ if }i\neq j\text{ and }d_{ij}(\tcA^{(t-1)})\neq 0\\
0&\text{ if }i\neq j\text{ and }d_{ij}(\tcA^{(t-1)})= 0
\end{cases}\\
\mathbf{B}(\tcA^{(t-1)} )_{ii}&=-\sum_{j\neq i} \mathbf{B}(\tcA^{(t-1)} )_{ij}
\end{align*}
We note that $\tau(\tcA^{(t)},\tcA^{(t-1)})$ is a quadratic function of $\tcA^{(t)}$ and all constraints on $\tcA$ are affine, hence we can use an off-the-shelf Quadratic Program solver (here \texttt{piqp} in \texttt{R} from \cite{schwan2023piqp}) to approximately, iteratively, minimize the majorizing function with the given constraints until a stopping criteria is met (either an iteration max or the obtained stress $|\sigma(\tcA^{(t)})-\sigma(\tcA^{(t-1)})|<\epsilon$ for a preset threshold $\epsilon$).
We note here that the algorithm at present often iteratively improves the stress even after iterates with very little change, and it is recommended that a large number of iterates be run rather than a preset $\epsilon$ be selected.
For the $m$ considered herein ($m\leq30$), a large number of iterates can be run in a short time window.

Once the approximating $\tcA$ is found in the WOMNI setting, we can convert back to $\cA$ via $\cA=\tcA\bL^{-1}$.
We can then set (where all other entries of the weight tensor $\mathbf{C}$ are 0 by the WOMNI restrictions)
\begin{align}
c^{(k)}_{i,k}&=\cA(i,k)=1-\cA(k,i)=c^{(k)}_{k,i} \text{ for }k\neq i\label{eq:AtoC1}\\
c^{(i)}_{i,k}&=1-\cA(i,k)=\cA(k,i)=c^{(i)}_{k,i} \text{ for }k\neq i.\label{eq:AtoC2}
\end{align}
Deriving the analogous equations for $\mathbf{C}$ in terms of $\mathcal{A}$ in the general Omnibus setting seems nontrivial, and we do not pursue this further here.

\section{Experiments and Simulations}
\label{sec:exp}

We next consider simulations and a trio of real data experiments to determine the effectiveness of our \texttt{Corr2Omni} procedure\footnote{Note that all code needed to produce the experiments and simulations can be found at \url{https://www.math.umd.edu/~vlyzinsk/Corr2Omni/}}.

\subsection{Generating Flat Correlation with Corr2Omni}
\label{sec:fc2o}

A natural application of the corr2Omni framework is to attempt to generate weight matrices $\mathcal{A}$ that induce minimal flat correlation.  
For example, if we set $\mathbf{R}=\mathbf{I}_3$ and 
$\mathfrak{R}=\frac{2}{3}\bJ_3+\frac{1}{3}\mathbf{I}_3$
(the minimal flat correlation in the $m=3$ setting), the output of our \texttt{corr2Omni} procedure is (rounded to 4 digits) exactly 
$$
\begin{bmatrix}
2 &1 &0\\
0 &2&1\\
1&0&2
\end{bmatrix},
$$
and $\fM_{3-}^{W}$ is recovered.
Note that we are \textbf{not} enforcing the flat correlation constraints implicitly in the \texttt{corr2Omni} function.
In the $m=4$ case, we do not know the optimal flat correlation.
If we use the optimal correlation from the $m=3$ case, setting $\mathbf{R}=\mathbf{I}_4$ and 
$\mathfrak{R}=\frac{2}{3}\bJ_4+\frac{1}{3}\mathbf{I}_4$, \texttt{corr2Omni} estimates (rounded to 4 digits)
$$\mathcal{A}=
\begin{bmatrix}
2.4259 &0 &1& 0.5741\\
1& 2.4259 &0& 0.5741\\
0& 1& 2.4259 &0.5741\\
0.4259 &0.4259 &0.4259 &2.7222
\end{bmatrix};\quad \mathcal{A}_{\fM_{4-}^W}\begin{bmatrix}
2.4384 &1 &0& 0.5616\\
0& 2.4384 &1& 0.5616\\
1& 0& 2.4384 &0.5616\\
0.4384 &0.4384 &0.4384 &2.6847
\end{bmatrix}
$$
and we note the similarity to the weight matrix constructed in Section \ref{sec:smallcorr} (with the role of $\bA^{(2)}$ and $\bA^{(3)}$ flipped).
The correlation matrix induced by the \texttt{corr2Omni} construction here is (rounded to 4 digits)
$$ 
\begin{bmatrix}
1& 0.7213& 0.7213& 0.7148\\
0.7213& 1 &0.7213& 0.7148\\
0.7213& 0.7213& 1&0.7148\\
0.7148& 0.7148& 0.7148& 1
\end{bmatrix},
$$
which, while not exactly flat, lends credence to the potential that the $m=4$ construction in Section \ref{sec:smallcorr} is also optimal and induces the smallest flat correlation for $m=4$.
For $m=5$ the corresponding constructions are (when using the targets $\mathbf{R}=\mathbf{I}_5$ and 
$\mathfrak{R}=\frac{2}{3}\bJ_5+\frac{1}{3}\mathbf{I}_5$)
\begin{align*}
\mathcal{A}=\begin{bmatrix}
3& 0& 0& 1&1\\
1& 3 &0& 0&1\\
1& 1& 3&0&0\\
0& 1& 1& 3&0\\
0& 0&1&1&3
\end{bmatrix},\quad 
\mathfrak{R}=\begin{bmatrix}
1.00& 0.72& 0.68& 0.68& 0.72\\
0.72& 1.00& 0.72& 0.68& 0.68\\
0.68& 0.72& 1.00& 0.72& 0.68\\
0.68& 0.68& 0.72& 1.00& 0.72\\
0.72& 0.68& 0.68& 0.72& 1.00
\end{bmatrix}
\end{align*}
In this same setting, if we use the targets $\mathbf{R}=\mathbf{I}_5$ and 
$\mathfrak{R}=0.72\bJ_5+0.28\mathbf{I}_5$ (0.72 an educated lower bound from the first stage estimate), we arrive at (where we round to 4 digits for ease of presentation)
\begin{align*}
\mathcal{A}=\begin{bmatrix}
3.000 &0.344 &0.000 &1.000 &0.656\\
0.656 &3.000 &0.344 &0.000 &1.000\\
1.000 &0.656 &3.000 &0.344 &0.000\\
0.000 &1.000 &0.656 &3.000 &0.344\\
0.344 &0.000 &1.000 &0.656 &3.000
\end{bmatrix},\quad 
\mathfrak{R}=\begin{bmatrix}
1.000 &0.724 &0.721 & 0.721 & 0.724\\
0.724 &1.000 &0.724 & 0.721 & 0.721\\
0.721 &0.724 &1.000 & 0.724 & 0.721\\
0.721 &0.721 &0.724 & 1.000 & 0.724\\
0.724 &0.721 &0.721 & 0.724 & 1.000
\end{bmatrix}
\end{align*}
and approximately flat correlation has been induced.  This suggests the two stage procedure for estimating the optimal flat correlation: 
first, use the estimate from a lower $m$, and then rerun \texttt{corr2Omni} using an updated estimate for the correlation level obtained from the first iterate.

\begin{figure}[t!]
  \centering
\includegraphics[width=1\linewidth]{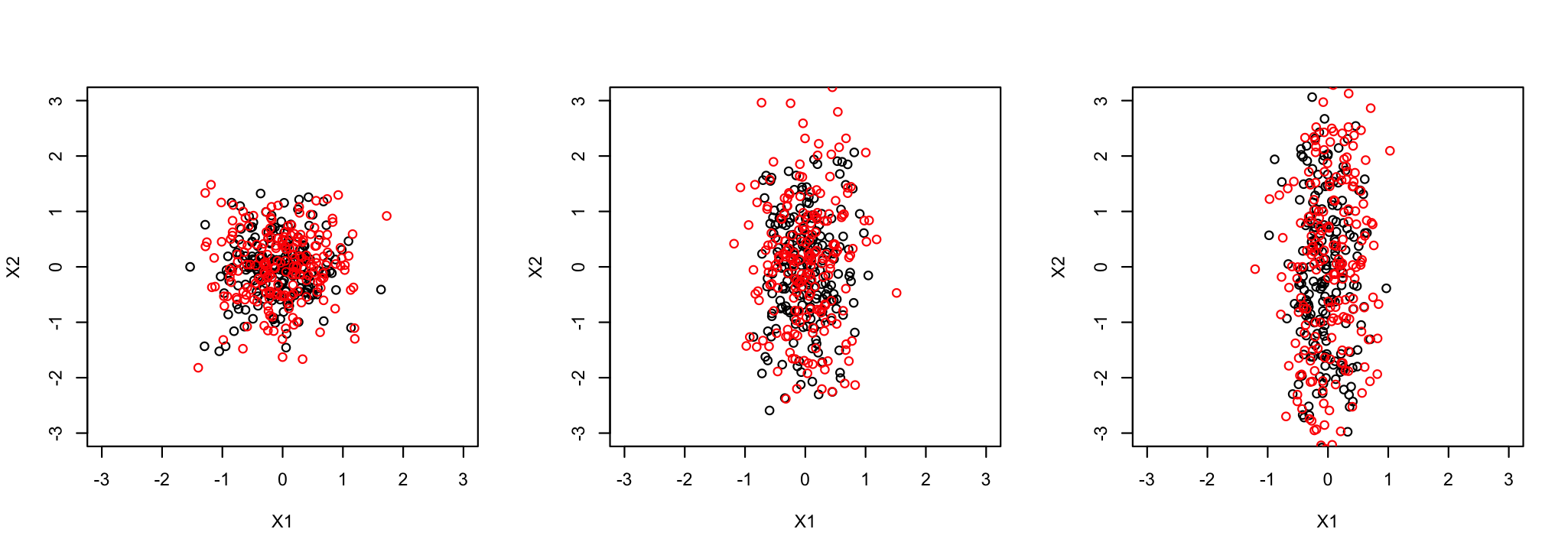}
  \caption{
  We sample $nMC=200$ graph triplets from the $\mathrm{JRDPG}_{\text{gen}}(F,500,3,\rho\mathds{1}_3)$ distribution where $F$ is specified in Section \ref{sec:simFlat} and $\rho=0\,(L),\,0.25\,(C),\,0.5\, (R)$.
  In each panel, we plot $\sqrt{500}(\bhX_1-\bhX_{501})$ where $\bhX$ is the ASE embedding of classical OMNI (in black) and $\fM_{3-}^{W}$ (in red).
  }
  \label{fig:flatcor3}
\end{figure}

\subsection{Flat Correlation Simulations}
\label{sec:simFlat}

A natural model for sampling random graphs with fixed inherent correlation $\rho_{s_1,s_2}=\rho$ for all $\{s_1,s_2\}\in\binom{[m]}{2}$ is the Single Generator Model from \cite{pantazis2022importance}.  To wit, the model is as follows.
\begin{definition}\label{def:generator_RDPG}\emph{(Single Generator Model from \cite{pantazis2022importance})}
With notation as in Definition \ref{def:JRDPG}, 
 we say that the random graphs $A^{(0)},A^{(1)},\cdots,A^{(m)}$ are an instantiation of a multiple $\mathrm{RDPG}$ with generator matrix $A^{(0)}$ and we write  $(A^{(0)},A^{(1)},A^{(2)},\cdots,A^{(m)},\bX)\sim \mathrm{JRDPG}_{\text{gen}}(F,n,m,\nu)$ if 
\begin{itemize}
\item[i.] Marginally, $(A^{(0)},\bX)\sim\mathrm{RDPG}(F,n$);
\item[ii.] $\nu\in[0,1]^{m}$ is a nonnegative vector with entries in $[0,1]$; we denote the $k$-th entry of $\nu$ via $\nu_k=\varrho_{k}$.  
\item[iii.] $(A^{(1)},A^{(2)},\cdots,A^{(m)},\bX)\sim \mathrm{JRDPG}(F,n,m,R)$ where 
$R=\nu\nu^T+\mathrm{diag}(I_{m}-\nu\nu^T)$ so that 
for $1\leq k_1<k_2\leq m$,
$$R_{k_1,k_2}
:=\rho_{k_1,k_2}
=\varrho_{k_1}\varrho_{k_2}.$$
\end{itemize}
\end{definition}
\noindent 
Note that if $\nu=\varrho\mathds{1}_m$ is a constant vector in the $\mathrm{JRDPG}_{\text{gen}}$ model, then the inherent correlation is flat (equal to $\varrho^2$).

To build further intuition, we consider the following simple experiment.
We sample $nMC=200$ graph sequences from the $\mathrm{JRDPG}_{\text{gen}}(F,500,3,\rho\mathds{1}_3)$ distribution, where sampling from $F$ consists of creating i.i.d. Dirichlet(1,1,1) random variables, and considering only the first two dimensions (so that $d=2$ here).
In Figure \ref{fig:flatcor3}, we show results for $\rho=0\,(L),\,0.25\,(C),\,0.5\,(R)$, where in
each panel, we plot $\sqrt{500}(\bhX_1-\bhX_{501})$ where $\bhX$ is the ASE embedding of classical OMNI (in black) and $\fM_{3-}^{W}$ (in red).
In each setting, we estimated the Gaussian covariance matrix (assuming diagonal covariance structure), and display the covariance estimates below
\begin{align*}
&\quad\rho=0\quad\quad\quad\quad\rho=0.25\quad\quad\quad\quad\rho=0.5\\
\text{Classical OMNI}\quad
&\begin{bmatrix}
0.26 & 0\\
0 & 0.31
\end{bmatrix}\hspace{5mm}
\begin{bmatrix}
0.16 & 0\\
0 & 0.98
\end{bmatrix}\hspace{5mm}
\begin{bmatrix}
0.11 & 0\\
0 & 2.02
\end{bmatrix}\\
\text{Embedding }\fM_{3-}^{W}\quad
&\begin{bmatrix}
0.34 & 0\\
0 & 0.40
\end{bmatrix}\hspace{5mm}
\begin{bmatrix}
0.22 & 0\\
0 & 1.29
\end{bmatrix}\hspace{5mm}
\begin{bmatrix}
0.16 & 0\\
0 & 3.06
\end{bmatrix}
\end{align*}
Note the smaller covariance matrix (due to more correlation being induced across networks) in classical OMNI as predicted by the construction.
Indeed, considering the $\rho=0$ case, we have that the limit correlations are a constant factor of each other and we note
$$
\begin{bmatrix}
0.26 & 0\\
0 & 0.31
\end{bmatrix}^{1/2}\cdot\frac{3/2}{4/3}=
\begin{bmatrix}
0.57 & 0\\
0 & 0.62
\end{bmatrix}\approx
\begin{bmatrix}
0.34 & 0\\
0 & 0.40
\end{bmatrix}^{1/2}=
\begin{bmatrix}
0.59 & 0\\
0 & 0.63
\end{bmatrix}
$$ 
Note also the effect of the inherent correlation on the limiting covariance structure, as the covariance entries rise (as expected) significantly in the presence of inherent model correlation.


\begin{figure}[t!]
  \centering
\includegraphics[width=0.8\linewidth]{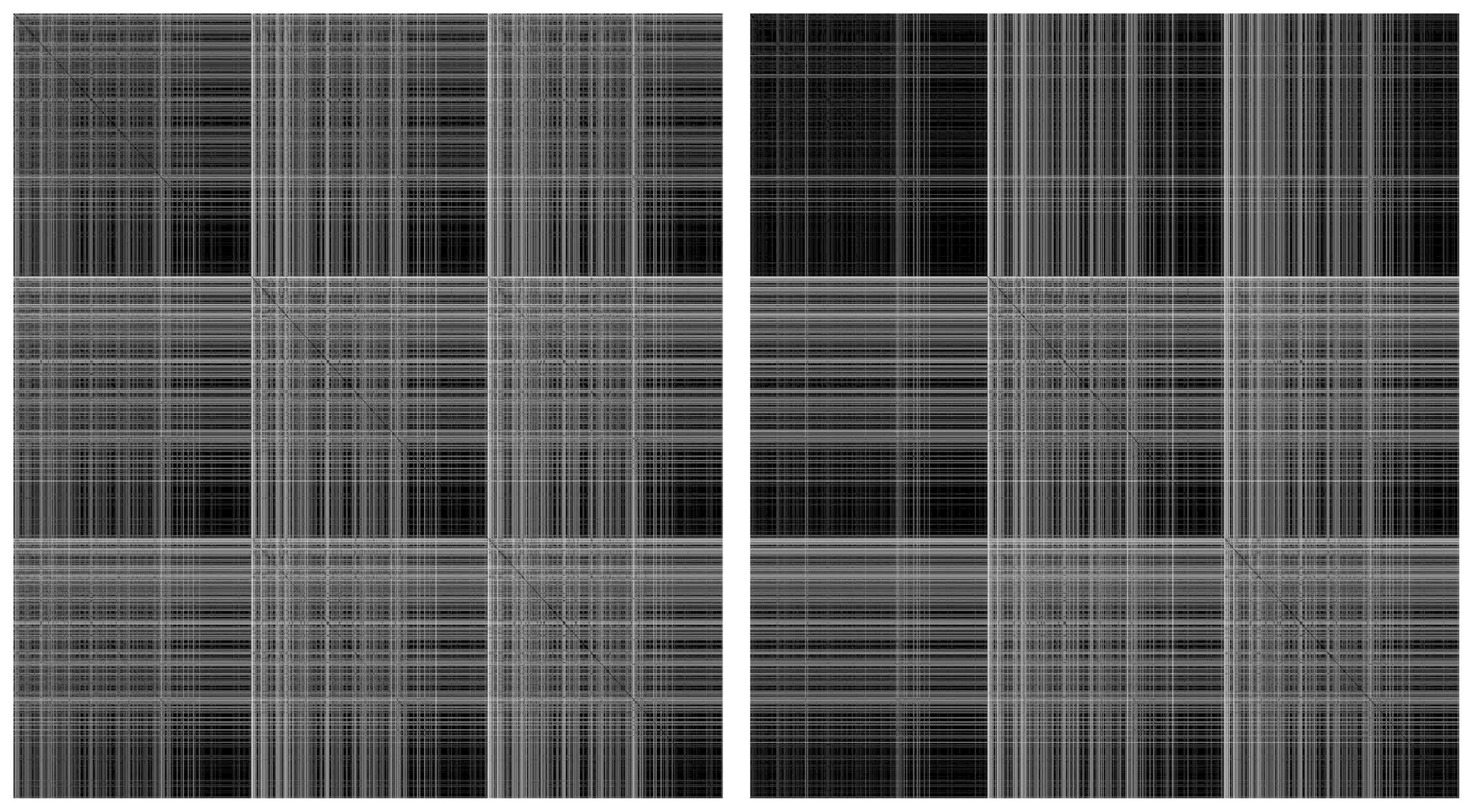}
  \caption{
Vertex--to--vertex Euclidean distance matrices computed from the classical Omni embedding (L) and the \texttt{corr2Omni} embedding (R) in the social network setting of Section \ref{sec:soc}.
}
  \label{fig:sn}
\end{figure}

\subsection{Multilayer social network}
\label{sec:soc}

Classical OMNI has proven to be a powerful joint embedding procedure, and algorithms incorporating classical OMNI outperform competing approaches across a host of inference tasks \cite{levin2017central}.
That said, classical OMNI does have the tendency to over-correlate networks in the embedding space.
To demonstrate this, we consider the multilayer social network from \cite{5992579}.
The network contains three layers showing the social connections among a common set of users in Twitter (now X), Youtube and Friendfeed.
After binarizing and symmetrizing the networks, removing isolated vertices and keeping only vertices common to all three networks, we are left with three graphs on a common set of 422 vertices.
In order to show the effect of \texttt{corr2Omni}, we first estimate the edge-wise correlation matrix for each pair of graphs using the \texttt{alignment strength} machinery of \cite{fishkind2019alignment}.
Alignment strength provides a principled measure of correlation across graphs that accounts for both edge-wise correlation and topological graph heterogeneity.
Here the alignment strength between a pair of graphs is computed as follows: (where $\Pi_n$ is the set of $n\times n$ correlation matrices, and $\bA^{(i)}_H$ is the matrix $\bA^{(i)}$ with its diagonal entries set to 0):
\begin{align*}
\text{str}&(\bA^{(i)},\bA^{(j)})=
1-\frac{\|\bA^{(i)}-\bA^{(j)}\|_F^2 }{\frac{1}{n!}\sum_{P\in\Pi_n}\|\bA^{(i)}P-P\bA^{(j)}\|_F^2 }\\
&=
1-\frac{\|\bA^{(i)}-\bA^{(j)}\|_F^2 }{
\|\bA^{(i)}\|_F^2+\|\bA^{(j)}\|_F^2-2\sum_\ell\bA^{(i)}_{\ell,\ell}\sum_\ell\bA^{(j)}_{\ell,\ell}/n-\sum_{\ell,k}(\bA^{(i)}_H)_{\ell,k}\sum_{\ell,k}(\bA^{(j)}_H)_{\ell,k}/\binom{n}{2}}
\end{align*}
We use this target $\mathfrak{R}$ (where $\mathfrak{R}_{ij}=\text{str}(\bA^{(i)},\bA^{(j)})$) to estimate the weight matrix 
$\mathcal{A}$ using this $\mathfrak{R}$ and $\mathbf{R}=\mathfrak{R}$.

\begin{figure}[t!]
  \centering
\includegraphics[width=0.8\linewidth]{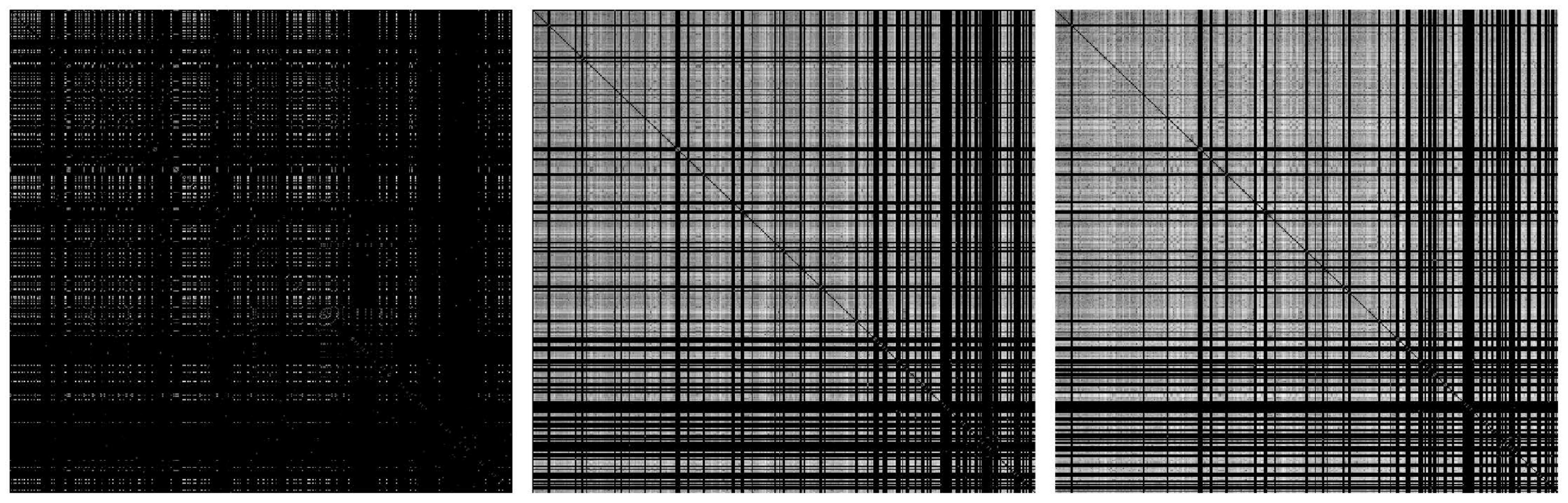}
  \caption{
For the social networks considered in Section \ref{sec:soc}, we plot row-wise Euclidean distance matrices computed for $\bA^{(1)}$ (L), $\bA^{({}2)}$ (C), and $\bA^{(3)}$ (R) (so that $\mathbf{D}^{(i)}_{k,\ell}=\|\bA^{(i)}_{k,\bullet}-\bA^{(i)}_{\ell,\bullet}\|_F$).
}
  \label{fig:sn2}
\end{figure}

If we consider the embedding correlation induced by \texttt{corr2Omni} as $\hat{\mathfrak{R}}_{c2O}$ and by classical OMNI as $\hat{\mathfrak{R}}_{O}$, we have 
$$    
\mathfrak{R}= \begin{pmatrix}  
 1.00& 0.28& 0.28\\
 0.28& 1.00& 0.55\\
 0.28& 0.55& 1.00
 \end{pmatrix};\,\,
\hat{\mathfrak{R}}_{O} \begin{pmatrix}  
 1.00& 0.92& 0.92\\
 0.92& 1.00& 0.95\\
 0.92& 0.95& 1.00
 \end{pmatrix};\,\,
\hat{\mathfrak{R}}_{c2O} \begin{pmatrix}  
 1.00& 0.68& 0.73\\
 0.68& 1.00& 0.95\\
 0.73& 0.95& 1.00
\end{pmatrix}
$$
and that
$$
\|\hat{\mathfrak{R}}_{c2O}-\mathfrak{R}\|_F=1.02;\quad \|\hat{\mathfrak{R}}_{O}-\mathfrak{R}\|_F=1.40
$$ 
While both methods tend to induce more correlation than is present in $\mathfrak{R}$, classical OMNI tends to flatten the correlation structure in $\mathfrak{R}$; by flatten, here we mean that the ratios of the correlations in the off-diagonal blocks is closer to 1 in $\hat{\mathfrak{R}}_{O}$ than in $\hat{\mathfrak{R}}_{c2O}$ and $\mathfrak{R}$.
In the embedding space, the effect of this is signal from $\bA^{(2)}$ (Friendfeed) and $\bA^{(3)}$ (Twitter) bleeds into the structure of $\bA^{(1)}$ (Youtube).  
To see this, we embed the two Omnibus matrices (where the embedding dimensions are chosen as in Section \ref{sec:omni}), and plot the vertex-vertex distance matrices for classical Omni and for \texttt{corr2Omni} in Figure \ref{fig:sn} (classical Omni on the left and \texttt{corr2Omni} on the right).
Each figure is a distance matrix in $\mathbb{R}^{1266\times1266}$; comparing these to the row-wise Euclidean distance matrices computed for $\bA^{(1)}$, $\bA^{(2)}$, and $\bA^{(3)}$ in Figure \ref{fig:sn2} (so that $\mathbf{D}^{(i)}_{k,\ell}=\|\bA^{(i)}_{k,\bullet}-\bA^{(i)}_{\ell,\bullet}\|_F$).
Comparing Figures \ref{fig:sn} and \ref{fig:sn2}, we see the structural bleed from $\bA^{(2)}$ and $\bA^{(3)}$ into $\bA^{(1)}$.
In the next examples, we will explore how this flattening of the structure caused by classical OMNI can impact subsequent inference.


\begin{figure}[t!]
  \centering
\includegraphics[width=1\linewidth]{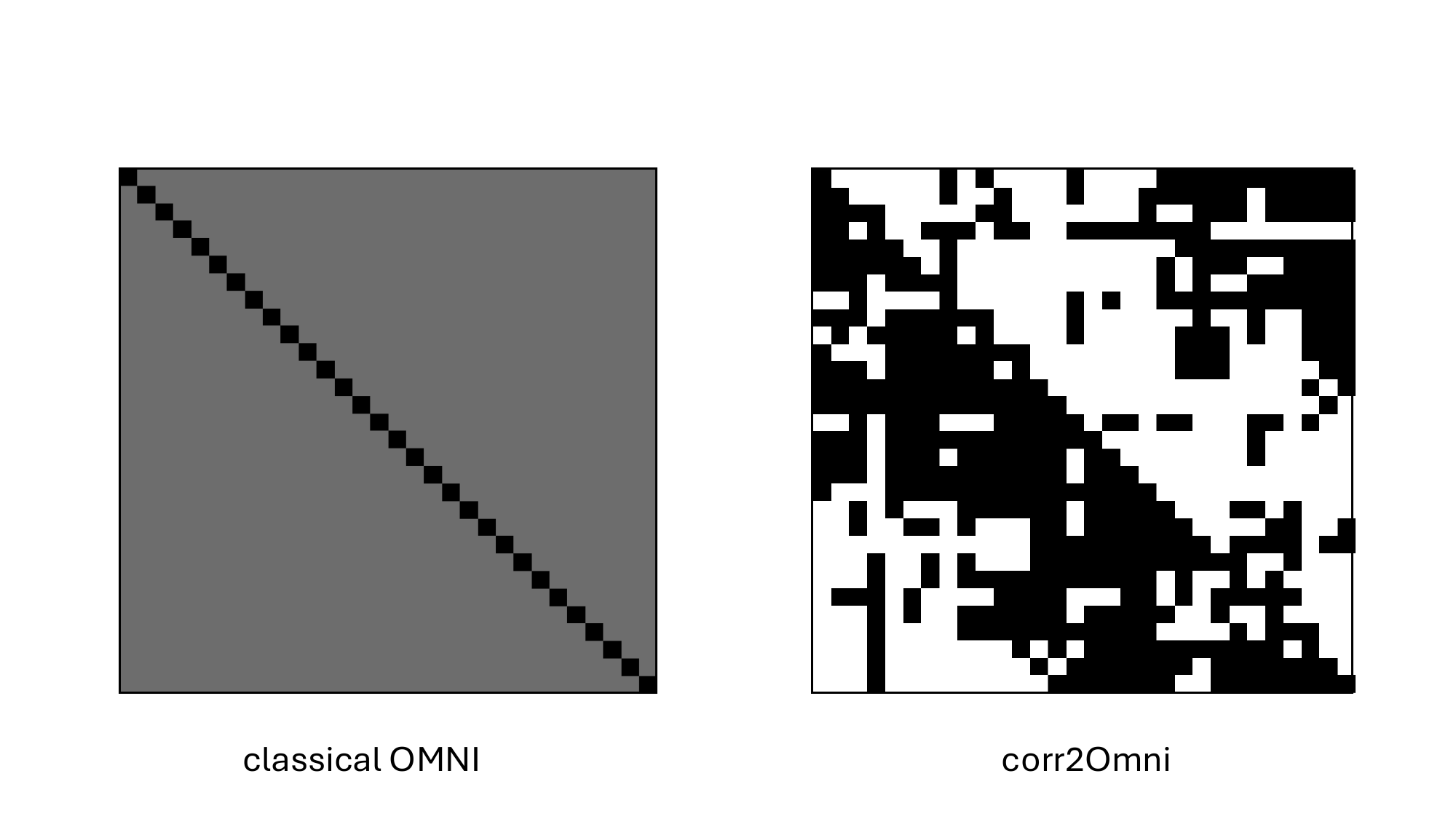}
  \caption{
Detail of the obtained $\mathcal{A}$ matrix obtained by \texttt{corr2Omni} (R) and classical OMNI (L) in the DTMRI experiment.
Note that the diagonal has been 0'd out to allow for the off-diagonal detail to be observed.
Large values ($\leq1$) are lighter grey and, where black denotes a value of 0 and white a value of 1.
}
  \label{fig:c2o_mri2}
\end{figure}

\subsection{DTMRI connectomes}
\label{sec:mri}

We next consider the test-retest DTMRI data from the HNU1 dataset \cite{hnu12,hnu11}, downloaded here from \url{neurodata.io}.
These networks (the \texttt{Desikan\_space-MNI152NLin6\_res-2x2x2} brains in the HNU1 dataset on \url{neurodata.io}) are 70 vertex graphs from 10 individuals.
Each vertex represents a brain region in the Desikan atlas and the edges are weighted to measure the level of neural connectivity between regions.
From each patient, we use 3 brain scans (from the 10 available, we use scan 1,2, and 3) for a total of 30 networks.
This down-sampling was purely for algorithmic scalability, as the \texttt{corr2Omni} algorithm, as written, is computationally expensive to run on 100 networks.

\begin{figure}[t!]
  \centering
\includegraphics[width=1\linewidth]{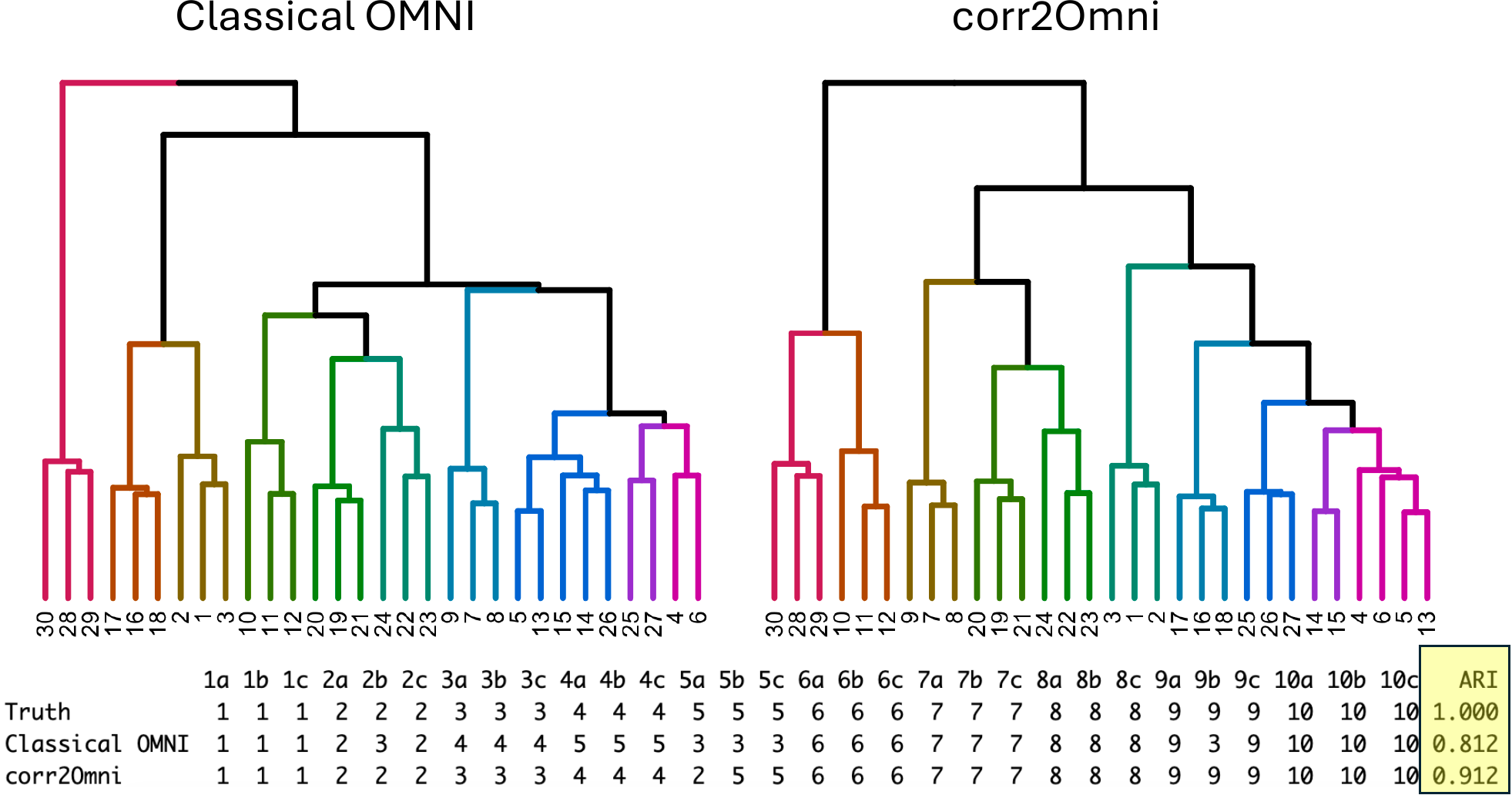}
  \caption{
For the 30 DTMRI networks, we apply the \texttt{corr2Omni} procedure (in the WOMNI setting) to find the weight tensor $\mathbf{C}$ that aims to induce correlation 
$\mathfrak{R}=0.54 \cdot \bJ_{30}+0.46 \cdot \mathbf{I}_{30}.$
We embed the collection using both classical OMNI and the \texttt{corr2Omni} weights.
Once the collection networks are embedded, we compute the Frobenius norm distance between each graph pair in the embedding and use hierarchical clustering to cluster the collection of 30 graphs into 10 clusters.  The clustering dendrograms for classical OMNI (L) and \texttt{corr2Omni} (R) are shown.
Lastly, these cluster labels are compared (using Adjusted Rand Index or ARI) to the true patient labels.
  }
  \label{fig:c2o_mri}
\end{figure}

Treating the patient labels as missing, we plan to embed these 30 networks using the OMNI procedure, and here aim to have the embedding induce the minimal flat correlation in the embedding space.
As the patient labels are treated as unknown, this choice of embedding is reasonable in that it would maximally preserve the individual latent structure between each brain pair, ideally preserving the similarity of scans within a patient and the differences across patients.
Once embedded, we will then cluster the collection of graphs to attempt to recover the missing patient labels.
For these 30 networks, the minimal induced flat correlation is lower bounded by Lemma \ref{lem:lowerbound} (in the case where $\rho=0$) by $0.54$.
We then apply the \texttt{corr2Omni} procedure (in the WOMNI setting) to find the weight tensor $\mathbf{C}$ that aims to induce correlation 
$\mathfrak{R}=0.54 \cdot \bJ_{30}+0.46 \cdot \mathbf{I}_{30}.$
Once the weight matrix $\mathcal{A}$ is obtained (with markedly lower stress value than classical OMNI; 22700.61 versus 24873.56), we invert $\mathcal{A}$ to find the weight tensor $\mathbf{C}$ according to Eqs. \ref{eq:AtoC1}--\ref{eq:AtoC2}, and embed the weighted Omnibus matrix $\mathfrak{M}$ into $d=8$ dimensions using ASE, where the embedding dimension was chosen via an automated elbow-selection procedure of Section \ref{sec:omni}.
We also embed the networks using classical OMNI, also into $d=8$ dimensions as selected by an independent analysis of the scree plot of the classical Omnibus matrix.

Once the collection networks are embedded, as in \cite{pantazis2022importance} we compute the Frobenius norm distance between each graph pair in the embedding, and use hierarchical clustering to cluster the collection of 30 graphs into 10 clusters; the corresponding dendrograms are plotted in Figure \ref{fig:c2o_mri}.
Lastly, these cluster labels are compared (using Adjusted Rand Index or ARI) to the true patient labels; see the table in Figure \ref{fig:c2o_mri}.
From Figure \ref{fig:c2o_mri2}, we note that the weight matrix obtained by \texttt{corr2Omni} is significantly more heterogeneous than the flat $\mathcal{A}=0.5\cdot\bJ_{30}+15\cdot\mathbf{I}_{30}$ in classical OMNI.
While classical OMNI performs fairly well in recovering the latent patient labels, the heterogeneous $\mathcal{A}$ of \texttt{corr2Omni} better decorrelates the networks (demonstrated by the reduction of stress versus classical OMNI), and this leads to the patient labels being recovered better in the \texttt{corr2Omni} setting (as measured by the higher fidelity clustering ARI).
In settings where the structure present in a collection of graphs is unknown, this example points to the utility of the \texttt{corr2Omni} embedding for better preserving the latent structure versus classical OMNI.

\subsection{Analyzing Aplysia motor programs}
\label{sec:aplysia}

\begin{figure}[t!]
  \centering
\includegraphics[width=1\linewidth]{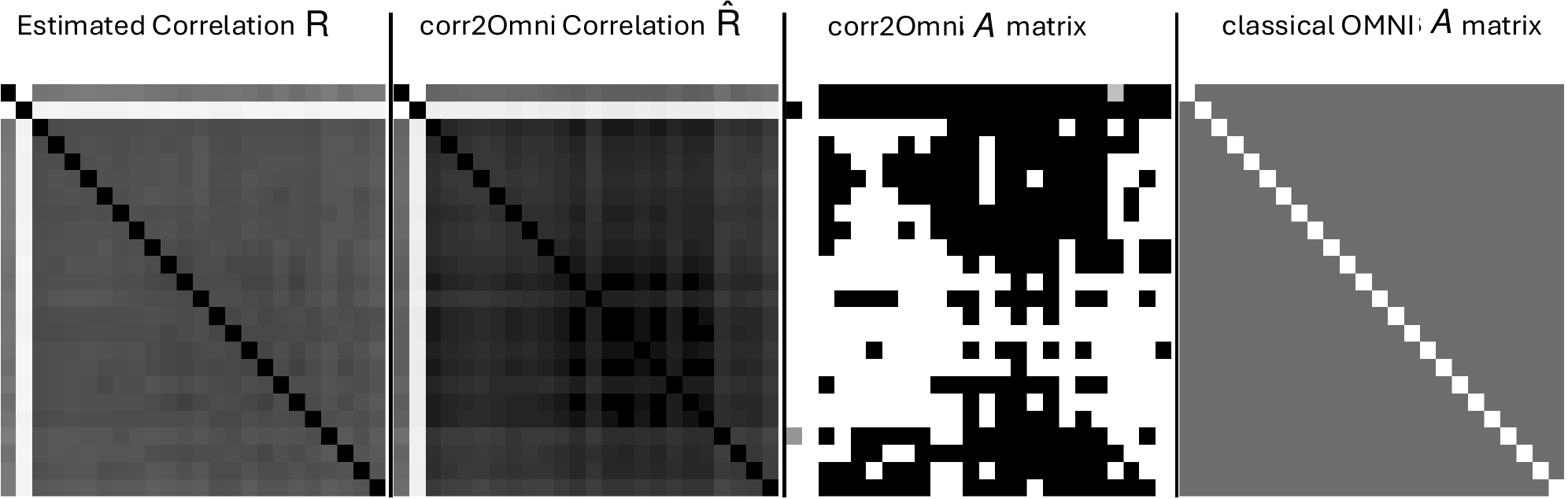}
  \caption{
From L to R: Detail of the estimated correlation matrix between the networks in the Aplysia motor program (L1) and the induced correlation provided by the \texttt{corr2Omni} estimate (L2).
Detail of the obtained $\mathcal{A}$ matrix obtained by \texttt{corr2Omni} (L3) and classical OMNI (L4) in the Aplysia experiment.
Note that in these $\mathcal{A}$ matrices, the diagonal has been 0'd out to allow for the off-diagonal detail to be observed.
Large values ($\leq1$) are darker grey and smaller colors ($\geq 0$) are lighter gray/white.
}
  \label{fig:c2o_aplysia1}
\end{figure}

In our next experiment, we consider a optical electrophysiological recording of neurons in the dorsal pedal ganglia of the \emph{Aplysia californica} gastropod from \cite{Aplysia}.
In the 20 minute recording of 82 neurons, a stimulus is applied at minute 1 to the dorsal pedal ganglion to elicit the Aplysia's escape motor program.
This data was analyzed in our motivating work \cite{pantazis2022importance}, where a carefully curated $\mathbf{C}$ tensor was constructed that allowed for finer grain analysis than that provided by classical OMNI.
Our goal in the present experiment is to automate the choice of $\mathbf{C}$ using our \texttt{corr2Omni} procedure.
We begin our analysis by converting this 20 minute recording into a time series of 24 correlation matrices (each accounting for a 50 second window so that the stimulus is not applied at the edge of two windows).
The subsequent 24 correlation networks are created by using the \texttt{sttc} algorithm of \cite{cutts} to convert the 50 second spike trains into correlation matrices (and with negative and non-computable correlations set to 0). 

Once the time-series of graphs is obtained, we embed the time series using two methods: classical OMNI and \texttt{corr2Omni}.
As in Section \ref{sec:soc}, we first estimate the edge-wise correlation matrix for each pair of graphs via alignment strength.
We then use this target $\mathfrak{R}$ in \texttt{corr2Omni} to estimate the weight matrix 
$\mathcal{A}$ using this $\mathfrak{R}$ and $\mathbf{R}=\mathfrak{R}$; 
a plot of this $\mathfrak{R}$ versus the induced correlation $\hat{\mathfrak{R}}$ estimated by \texttt{corr2Omni} is shown in Figure \ref{fig:c2o_aplysia1}.
We again note the trend of the Omnibus framework (exhibited here in \texttt{corr2Omni}; though to a lesser extent here than classical OMNI) to amplify/increase the existing correlation.
The obtained $\mathcal{A}$ matrices from \texttt{corr2Omni} as well as that from classical OMNI are displayed in Figure \ref{fig:c2o_aplysia1}; note the heterogeneity as compared to classical OMNI.
The distinctive pattern in \texttt{corr2Omni}'s $\mathcal{A}$ matrix allows for the estimated induced correlation to be much closer to the estimated inherent correlation than allowed for by classical OMNI.

\begin{figure}[t!]
  \centering
\includegraphics[width=1\linewidth]{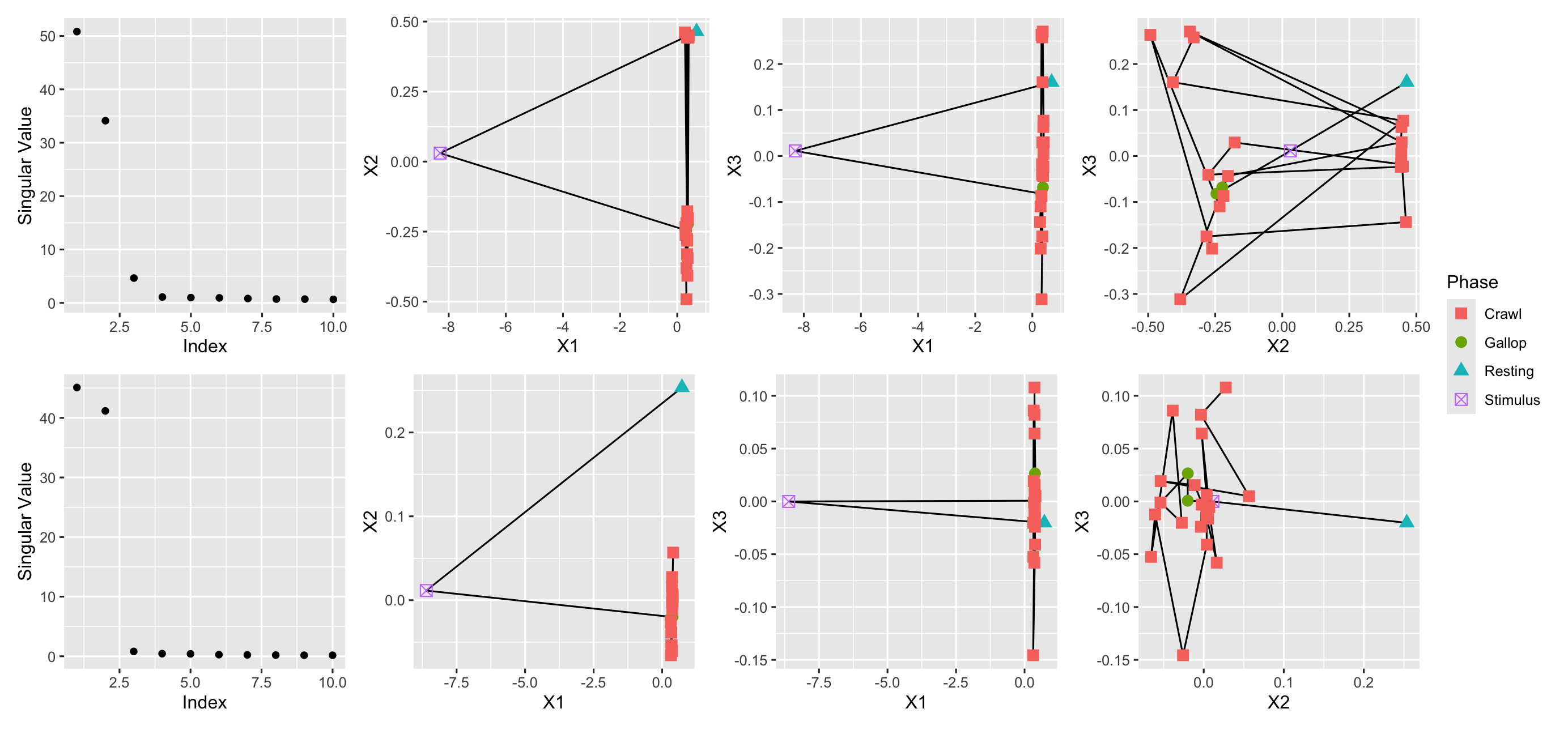}
  \caption{
Embedding the distance matrices for the classical OMNI embedding (top) and the \texttt{corr2Omni} embedding (bottom) using classical multidimensional scaling.
The first column shows the SCREE plot for each distance matrix.
Each row (columns 1--3) show the pairs plots for the 3 dimensional MDS embeddings of the distance matrices with $X1$, $X2$, and $X3$ corresponding to the first, second, and third MDS dimensions.
Lines in the pairs plots connect sequential graphs in the motor program.
}
  \label{fig:c2o_aplysia3}
\end{figure}

We embed the two Omnibus matrices (choosing the embedding dimension separately and automatedly according to the discussion in Section \ref{sec:omni}), and compute pairwise distances across the networks in the embedding space.
We then cluster these distance matrices using hierarchical clustering (with Ward's linkage; \texttt{ward.D2} in \texttt{R}). 
Dendrograms for the hierarchically clustered classical OMNI graphs (Top) and \texttt{corr2Omni} graphs (Bottom) are plotted in Figure \ref{fig:c2o_aplysia4}.  
Each dendrogram is colored to show (from L to R) the 2, 3,4, and 5 clusters obtained by the appropriate cut of the tree.  
Note the y-axis is on a square-root scale to allow the finer detail in the lower portion of the trees to be observed.
From these dendrograms, we can see (as noted in \cite{pantazis2022importance}) that the flat correlation structure induced by classical OMNI masks key signal in the motor program that is recovered by \texttt{corr2Omni}.
To wit, while both methods are able to isolate the stimulus (graph 2), classical OMNI does not distinguish the unique role of graph 1 as the lone control graph (before the stimulus).
We do note that if we cluster into 6 clusters, classical OMNI does isolate graph 1 as a singleton cluster as \texttt{corr2Omni} does from 3 clusters onward.
Moreover, if we look at the 4 and 5 cluster setting, \texttt{corr2Omni} is better able to recover the distinct phases of the motor program: resting/control state (in graph 1), then a stimulus (graph 2) followed by a gallop phase (graphs 3 and 4) and then a crawl phase.
The purple cluster is (roughly) capturing the gallop with the green and blue the subsequent crawling. 
Indeed, inferring the true labels of the motor program for Figure 4 in \cite{Aplysia} (roughly) as gallop lasting from graphs 3 to 4 and then the crawl, the ARI of classical OMNI with 4 clusters is 0.035 and of \texttt{corr2Omni} 0.195.

In Figure \ref{fig:c2o_aplysia3}, we plot the classical multidimensional scaling embedding of the distance matrices for the classical OMNI embedding (top) and the \texttt{corr2Omni} embedding (bottom).
The first column shows the SCREE plot for each distance matrix.
The figure reinforces the observation that \texttt{corr2Omni} does better than classical OMNI at isolating the unique resting state graph (here, the cyan triangle).
Moreover, we see that while both methods do a fairly good job of clustering the gallop phase (see the clustering of the green dots in the $X2$ versus $X3$ plot on the right), the transition from gallop to crawl in \texttt{corr2Omni} keeps the resting state distinct from the sequential crawl states.

\begin{figure}[t!]
  \centering
\includegraphics[width=1\linewidth]{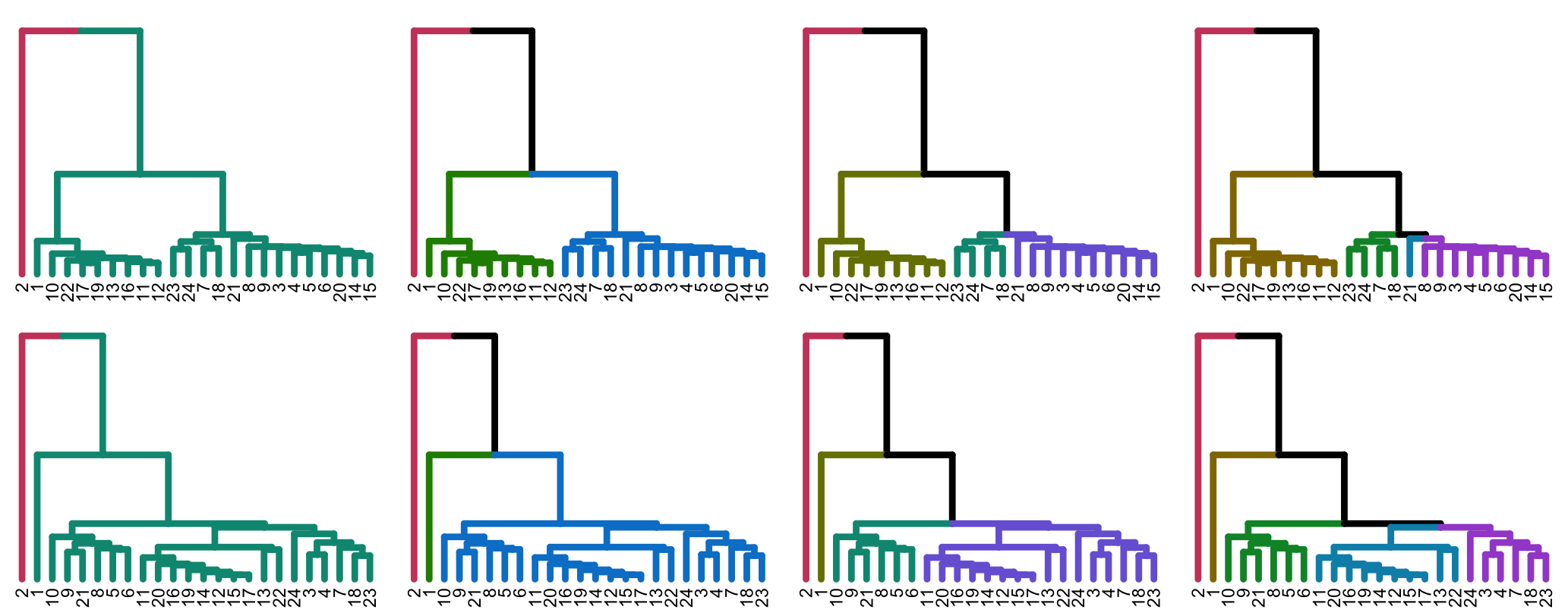}
  \caption{
Dendrograms for the hierarchically clustered classical OMNI graphs (Top) and \texttt{corr2Omni} graphs (Bottom).  
Each dendrogram is colored to show the (from left to right) 2, 3, 4, 5 clusters obtained by the appropriate cut of the tree.  Note the y-axis for the top and bottom rows have been separately scaled to allow the finer detail in the lower portion of the tree to be observed.
Note, among other improvements, that the \texttt{corr2Omni} dendrograms isolate the unique resting state (graph 1) whereas classical OMNI does not.
}
  \label{fig:c2o_aplysia4}
\end{figure}

\section{Conclusion and future work}
\label{sec:disc3}

Empirical evidence suggests that joint embedding algorithms induce correlation across the networks in the embedding space.
In the Omnibus embedding setting of \cite{pantazis2022importance}, this phenomena is best understood and the detrimental effect of sub-optimal Omnibus matrix constructions is demonstrated.
That said, the better constructions in \cite{pantazis2022importance} required careful curation and were not able to be automatized.
Taking this as our starting point, this present work addresses two key questions in the Omnibus embedding framework:
the \textit{correlation--to--OMNI} problem and \textit{flat correlation} problem. 
In the flat correlation problem, working in  a subspace of the fully general Omnibus matrices---namely the WOMNI or Weighted OMNI setting---we demonstrated a lower bound for the flat correlation and we showed that classical OMNI induces the maximum flat correlation among all WOMNI matrices. 
In the correlation--to--OMNI problem, we presented an algorithm---named \texttt{corr2Omni}---that approximates the optimal matrix of generalized Omnibus weights $\mathcal{A}$ from a given correlation matrix computed by the pairwise graph correlations.
In the WOMNI setting, converting from the weight matrix $\mathcal{A}$ to the Omnibus tensor $\mathbf{C}$ is immediate ($\mathbf{C}$ is needed for the embedding to be computed), though we are not yet aware of how to do this precisely in the more generalized Omnibus setting.
This is an open problem (as far as we are aware) and would be key for allowing for better, more general Omnibus constructions.

Nonetheless, even in the WOMNI restricted space, we show in simulations and a pair of real data networks the capacity of \texttt{corr2Omni} to induce less flat correlation and better structured correlation, both of which can measurably impact subsequent inference.
 and we demonstrated the accuracy of the algorithm on simulated data.
 Computationally optimizing the \texttt{corr2Omni} algorithm is another natural step, though the algorithm runs very quickly when there $m\approx 30$ networks.
Moreover, little is known about the performance of \texttt{corr2Omni} from a theoretical perspective, or about the objective function gap in general.

\vspace{3mm}

\noindent\textbf{Acknowledgements:} This material is based on research sponsored by NIH R01 NS121220 and by the Air Force Research
Laboratory (AFRL) and Defense Advanced Research Projects Agency (DARPA) under agreement number
FA8750-20-2-1001. 
The U.S. Government is authorized to reproduce and distribute reprints for Governmental purposes
notwithstanding any copyright notation thereon. The views and conclusions contained herein are
those of the authors and should not be interpreted as necessarily representing the official policies
or endorsements, either expressed or implied, of the AFRL and DARPA or
the U.S. Government.
The authors also gratefully acknowledge the support of the JHU HLTCOE.  We would also like to thank Dr. Evan Hill (at RFU) for help in procuring the Aplysia data.

\bibliographystyle{plain}
\bibliography{biblio,biblio_summary}
\appendix
\section{Proofs}
\noindent\normalfont
We provide proofs and explanations for the results of Section \ref{sec:flatcorr}. First, we prove Eq. (\ref{eq:obs2}) and then, we compute the lower bound from Subsection \ref{ssec:bounds}. Next, we present how the examples with smaller flat correlation arise, and finally, we prove Theorem \ref{thm:maxOMNI}.

\subsection{Proof of Eq. (\ref{eq:obs2})}
\label{obs:2}
\noindent
We first show and prove Lemma \ref{lem:constA}. 
\begin{lemma}
\label{lem:constA}
\noindent
\normalfont
In the weighted OMNI (WOMNI) setting, if all the cumulative weights are equal, then the only allowed value is $\frac{m+1}{2}$; that is, if $\alpha(q,q):=\sum\limits_{r=1}^m c^{(q)}_{q,r}=A$ for all $q\in[m]$, then $A=\frac{m+1}{2}$.
\end{lemma}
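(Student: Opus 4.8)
The plan is to derive the claim purely from two structural facts about the row-sum matrix $\mathcal{A}$ that are already recorded in the excerpt: that every block row of $\mathfrak{M}^W$ carries total weight $m$ (equivalently $\mathcal{A}\vec 1 = m\vec 1$), and that, in the WOMNI setting, the off-diagonal entries of $\mathcal{A}$ come in complementary pairs, $\alpha(i,k)+\alpha(k,i)=1$ for $i\neq k$. The latter is immediate from Eqs. (\ref{eq:AtoC11})--(\ref{eq:AtoC21}): under the restriction (\ref{defn:womni}) the graph $\bA^{(k)}$ with $k\neq i$ appears in block row $i$ only in the $(i,k)$ block, so $\alpha(i,k)=c^{(k)}_{i,k}$, and then the CCC constraint $c^{(i)}_{i,k}+c^{(k)}_{i,k}=1$ together with the symmetry $c^{(q)}_{i,j}=c^{(q)}_{j,i}$ gives the complementary-pair relation.

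First I would fix the hypothesis $\alpha(q,q)=A$ for all $q\in[m]$ and apply the row-sum identity: for each $q$, $\alpha(q,q)+\sum_{k\neq q}\alpha(q,k)=m$, hence $\sum_{k\neq q}\alpha(q,k)=m-A$. Summing over $q\in[m]$ yields $\sum_{q}\sum_{k\neq q}\alpha(q,k)=m(m-A)$.

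Next I would re-index the left-hand side by unordered pairs: each term $\alpha(q,k)$ with $q\neq k$ occurs exactly once, and pairing $\alpha(q,k)$ with $\alpha(k,q)$ and invoking the complementary-pair relation gives $\sum_{q}\sum_{k\neq q}\alpha(q,k)=\sum_{\{q,k\}\in\binom{[m]}{2}}1=\binom{m}{2}=\tfrac{m(m-1)}{2}$. Equating the two expressions produces $m(m-A)=\tfrac{m(m-1)}{2}$, so $A=m-\tfrac{m-1}{2}=\tfrac{m+1}{2}$, as claimed. I would also note that running this same computation \emph{without} assuming the $\alpha(q,q)$ are all equal gives $\sum_q\alpha(q,q)=\tfrac{m(m+1)}{2}$, i.e. Eq. (\ref{eq:obs2}), so the lemma and the displayed identity are really one argument.

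I do not anticipate a genuine obstacle: the proof is a counting identity. The only point requiring care is the bookkeeping behind $\alpha(i,k)=c^{(k)}_{i,k}$ for $i\neq k$ --- one must verify that, under the WOMNI restriction, the only off-diagonal block of block row $i$ in which $\bA^{(k)}$ can carry nonzero weight is the $(i,k)$ block, so that the complementary-pair relation reduces cleanly to the CCC constraint on that single block.
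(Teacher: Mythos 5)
Your proof is correct and is essentially the paper's argument: both are double-counting identities on the off-diagonal cumulative weights, combining the row-sum constraint $\sum_{k}\alpha(q,k)=m$ with the WOMNI/CCC structure. The only (harmless) difference is that you evaluate $\sum_{q}\sum_{k\neq q}\alpha(q,k)$ via the complementary pairs $\alpha(q,k)+\alpha(k,q)=1$ to get $\binom{m}{2}$ directly (which, as you note, also yields Eq.~(\ref{eq:obs2}) without the equal-diagonal hypothesis), whereas the paper evaluates the same sum a second time as $\sum_q\sum_{r\neq q}c^{(q)}_{q,r}=m(A-1)$ and equates the two.
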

\begin{proof}
First note that in each row $q$ the cumulative weights must sum up to $m$, i.e., for each $q\in[m]$, $\sum_{\ell=1}^{m}\alpha(q,\ell)=m$. This implies that (under the assumption that $\alpha(q,q)=A$) 
$$\sum_{\ell\neq q}\alpha(q,\ell)=m-A.$$

\noindent
Now, observe that $\alpha(q,\ell)=c^{(\ell)}_{q,\ell}$ for any $\ell\neq q$ (This is because of the WOMNI architecture), hence $\sum\limits_{\ell\neq q}c^{(\ell)}_{q,\ell}=m-A.$ 
Next, we have the following $2m$ linear constraints: \begin{align}
    \sum\limits_{r\neq q}c^{(q)}_{q,r}&=A-1 \text{ for all }q\in[m];\label{eq:sumr}\\
    \sum\limits_{\ell\neq q}c^{(\ell)}_{q,\ell}&=m-A \text{ for all }q\in[m].\label{eq:suml}
\end{align}
Next, note that it always holds that (recall that $c^{(q)}_{a,b}=c^{(q)}_{b,a}$) \begin{equation}
    \label{eq:lemma1}
\sum\limits_q \sum\limits_{\ell\neq q}c^{(\ell)}_{q,\ell} =\sum\limits_q \sum\limits_{r\neq q}c^{(q)}_{q,r}.
\end{equation}
Summing Eqs. \ref{eq:suml} and \ref{eq:sumr} over $m$, then 
yields that $m^2-mA=mA-m\Rightarrow A=\frac{m+1}{2}$.
\end{proof}

\noindent
To visualize Eq. \ref{eq:lemma1}, notice that
\begin{align*}
    \sum\limits_q \sum\limits_{\ell\neq q}c^{(\ell)}_{q,\ell}&=\vec{1}_{m}^{\,T}
    \begin{bmatrix}
        c^{(1)}_{1,2}& c^{(1)}_{1,3} &\cdots &c^{(1)}_{1,m}\\ c^{(2)}_{2,1}& c^{(2)}_{2,3} &\cdots &c^{(2)}_{2,m}\\
        \vdots&\vdots&\ddots&\vdots\\
        c^{(m)}_{m,1}& c^{(m)}_{m,2} &\cdots &c^{(m)}_{m,m-1}
    \end{bmatrix}\vec{1}_{m-1}\\
    &=\vec{1}_{m}^{\,T}
    \begin{bmatrix}
        0&c^{(1)}_{1,2}& c^{(1)}_{1,3} &c^{(1)}_{1,4}&\cdots &c^{(1)}_{1,m}\\ c^{(2)}_{2,1}& 0& c^{(2)}_{2,3} &c^{(2)}_{2,4}&\cdots &c^{(2)}_{2,m}\\
        c^{(3)}_{3,1}& c^{(3)}_{3,2}& 0& c^{(3)}_{3,4} &\cdots &c^{(3)}_{3,m}\\
        \vdots&\vdots&\vdots&\vdots&\ddots&\vdots\\
        c^{(m)}_{m,1}& c^{(m)}_{m,2} &c^{(m)}_{m,3}&c^{(m)}_{m,4}&\cdots &0
    \end{bmatrix}\vec{1}_{m}\\
        &=\vec{1}_{m}^{\,T}
    \begin{bmatrix}
        0&c^{(2)}_{2,1}& c^{(3)}_{3,1} &c^{(4)}_{4,1}&\cdots &c^{(m)}_{m,1}\\ 
        c^{(1)}_{1,2}& 0& c^{(3)}_{3,2} &c^{(4)}_{4,2}&\cdots &c^{(m)}_{m,2}\\
        c^{(1)}_{1,3}& c^{(2)}_{2,3}& 0& c^{(4)}_{4,3} &\cdots &c^{(m)}_{m,3}\\
        \vdots&\vdots&\vdots&\vdots&\ddots&\vdots\\
        c^{(1)}_{1,m}& c^{(2)}_{2,m} &c^{(3)}_{3,m}&c^{(4)}_{4,m}&\cdots &0
    \end{bmatrix}\vec{1}_{m}=\sum\limits_q \sum\limits_{r\neq q}c^{(q)}_{q,r}
\end{align*}

\noindent
\textit{Proof of Eq. (\ref{eq:obs2}):}
Eq. (\ref{eq:obs2}) derives from the row-sum constraints (i.e., $\sum_{q=1}^m \alpha(k,q)=m$ for any row $k\in[m]$) and Eq. (\ref{eq:lemma1}).
To wit (recalling in the WOMNI setting that $c^{(q)}_{i,j}=0$ if $i,j\neq q$)
\begin{align*}
    m^2&=\sum_k\sum_q\alpha(k,q)=\sum_q\alpha(q,q)+\sum_q\sum_{k\neq q}\alpha(k,q)=\sum_q\alpha(q,q)+\sum_q\sum_{k\neq q}\sum_\ell c^{(q)}_{k,\ell}\\
    &=\sum_q\alpha(q,q)+\sum_q\underbrace{\sum_{k\neq q}\sum_{\ell\neq q} c^{(q)}_{k,\ell}}_{=0}+\sum_q\sum_{k\neq q} c^{(q)}_{k,q}=\sum_q\alpha(q,q)+\sum_q\underbrace{\sum_{k\neq q} c^{(q)}_{q,k}}_{=\alpha(q,q)-1}
\end{align*}
This yields that $m^2=2\sum_q\alpha(q,q)-m$, and solving for $\sum_q\alpha(q,q)$ yields the desired result.

\subsubsection{Proof of Lemma 
\ref{lem:lowerbound} and Eq. \ref{eq:upperbound}}
\label{ssec:lowerbound}
We first provide a proof of the lower bound.\\
\textit{Proof of Lemma \ref{lem:lowerbound}}:
First, consider the special case where $m=2$.
In this case, the value of $\mathfrak{r}_{1,2}=\frac{3}{4}+\frac{\rho}{4}$ for any choice of off-diagonal weights.  This is explainable via the fact that $\rho(X,Y)=\rho(cX,Y/c)$ for any $c\neq 0$, and the off-diagonal weights playing the role of weighting the contributions of the two graphs.

Next, we consider the case where $m\geq 3$.
Let $q^*$ be an arbitrary element of $\text{argmax}_q\alpha(q,q)$, and let $q_*$ be an arbitrary element of $\text{argmin}_q\alpha(q,q)$ that is distinct from $q^*$.
Flat correlation implies that if $s\neq q^*,q_*$,
letting $D:=\alpha(s,s)$,  $\alpha^*=\alpha(q^*,q^*)$, and $\alpha_*=\alpha(q_*,q_*)$ we have
\begin{align}
    \label{eq:covtermLB}
    (\alpha^*-c^{(q^{*})}_{q^{*},s})^2&+(D-c^{(s)}_{q^{*},s})^2+\sum_{\ell\neq q^{*},s}(c^{(q^{*})}_{q^{*},\ell}-c^{(s)}_{s,\ell})^2=\notag\\&(\alpha_*-c^{(q_{*})}_{q_{*},s})^2+(D-c^{(s)}_{q_{*},s})^2+\sum_{\ell\neq q_{*},s}(c^{(q_{*})}_{q_{*},\ell}-c^{(s)}_{s,\ell})^2,
\end{align}
where $c^{(q^{*})}_{q^{*},s}=1-c^{(s)}_{q^{*},s}$ and $c^{(q_{*})}_{q_{*},s}=1-c^{(s)}_{q_{*},s}$. 

In order to find a lower bound of $\alpha^*-\alpha_*$, we will proceed by minimizing the left hand side and maximizing the right hand size of Eq. (\ref{eq:covtermLB}), and then constructing the bound suitably. 
For the left hand side, note that
\begin{align*}
 (\alpha^*-c^{(q^{*})}_{q^{*},s})^2+(D-c^{(s)}_{q^{*},s})^2&+\sum_{\ell\neq q^{*},s}(c^{(q^{*})}_{q^{*},\ell}-c^{(s)}_{s,\ell})^2\geq (\alpha^*-c^{(q^{*})}_{q^{*},s})^2+(D-(1-c^{(q^*)}_{q^{*},s}))^2
 \end{align*}
 As a function of $c^{(q^{*})}_{q^{*},s}$, this is minimized by
 $$c^{(q^{*})}_{q^{*},s}=\frac{1}{2}(1+\alpha^*-D)$$
 If $\alpha^*-D> 1$, then the minimizer is outside the $[0,1]$ interval, and the minimum over $c^{(q^{*})}_{q^{*},s}\in[0,1]$ is achieved when $c^{(q^{*})}_{q^{*},s}=1$ yielding a lower bound of 
 $$(\alpha^*-1)^2+D^2$$
 If $\alpha^*-D\leq 1$, then the minimizer is inside the $[0,1]$ interval, and the minimum is achieved at 
 \begin{align*}
 (\alpha^*-c^{(q^{*})}_{q^{*},s})^2+(D-(1-c^{(q^*)}_{q^{*},s}))^2&\geq \left(\alpha^*-\frac{1}{2}(1+\alpha^*-D)\right)^2+
 \left(D-\frac{1}{2}(1-\alpha^*+D)\right)^2\\
 &=\frac{1}{2}(\alpha^*-1+D)^2\\
 &\geq \frac{1}{4}(2\alpha^*-2)^2+\frac{1}{4}(2D-1)^2\\
 &=(\alpha^*-1)^2+(D-1/2)^2
\end{align*}
We will use this lower bound as the global lower bound moving forward.
Similarly, for the right hand side we have
\begin{align*}
    (\alpha_*&-c^{(q_{*})}_{q_{*},s})^2+(D-c^{(s)}_{q_{*},s})^2+\sum_{q\neq q_{*},s}(c^{(q_{*})}_{q_{*},q}-c^{(s)}_{s,q})^2\\
    &\leq \alpha_*^2-2\alpha_*c^{(q_{*})}_{q_{*},s}+(c^{(q_{*})}_{q_{*},s})^2+D^2-2D(1-c^{(q_{*})}_{q_{*},s})+(1-c^{(q_{*})}_{q_{*},s})^2+(m-2)\\
    &\leq \alpha_*^2-2\alpha_*c^{(q_{*})}_{q_{*},s}+(c^{(q_{*})}_{q_{*},s})^2+D^2-2\alpha_*(1-c^{(q_{*})}_{q_{*},s})+(1-c^{(q_{*})}_{q_{*},s})^2+(m-2)\\
    &\leq \alpha_*^2-2\alpha_*+1+D^2+(m-2)\\
    &\leq  (\alpha_*-1)^2+D^2+(m-2).
\end{align*}
Combining bounds, we have that
\begin{align}
    \label{eq:LB1}
    &(\alpha^*-1)^2-(\alpha_*-1)^2+(D-1/2)^2-D^2\leq m-2\notag\\
    &\Rightarrow(\alpha^*-\alpha_*)(\alpha^*+\alpha_*-2)\leq m+D-7/4\leq 2m-7/4\notag\\
    &\Rightarrow \alpha^*-\alpha_*\leq\frac{2m-7/4}{\alpha^*+\alpha_*-2}\notag\\&\xRightarrow [\alpha_*\geq 1]{\alpha^*\geq \frac{m+1}{2}} \alpha^*-\alpha_*\leq \frac{2m-7/4}{\frac{m+1}{2}-1}=4\underbrace{\left(\frac{m-7/8}{m-1}\right)}_{\leq 1.0625\text{ for }m\geq 3}\leq 4.5.
\end{align}
From Eq. (\ref{eq:obs2}), we have that 
\begin{align*}
\alpha_* \leq \frac{m+1}{2}\Rightarrow \alpha^*-4.5\leq \frac{m+1}{2}\Rightarrow \alpha^*\leq \frac{m+10}{2}.
\end{align*}
and 
\begin{align*}
\alpha^* \geq \frac{m+1}{2}\Rightarrow \alpha_*+4.5\geq \frac{m+1}{2}\Rightarrow \alpha_*\geq \frac{m-8}{2}.
\end{align*}
Plugging this bound into Eq. (\ref{eq:naivebound}) we obtain the desired result of Lemma \ref{lem:lowerbound}. $\qed$

We next provide a proof of the upper bound in Eq. \ref{eq:upperbound}.
For the upper bound on $\mathfrak{r}_{s_1,s_2}$, note that
\begin{align*}
    \mathfrak{r}_{s_1,s_2}&\leq 1-(1-\rho)\frac{2(\alpha_*-1)^2 }{2m^2}\\
    &\leq 1-(1-\rho)\frac{2\alpha_*^2-4\alpha_*+2 }{2m^2}\\
    &\leq 1-(1-\rho)\frac{\frac{m^2}{2}-8m+32
    -
    2(m+1)+2 }{2m^2}\\
    &=\frac{3}{4}+\frac{\rho}{4}+(1-\rho)\left(\frac{5}{m}-\frac{16}{m^2}\right)
\end{align*}
as desired $\qed$


\subsection{Examples with smaller flat correlation}
\label{eg:example}
\subsubsection{m=3 case}
In the $m=3$ case, we seek to answer the following question:  How do we derive $\mathfrak{M}_{3-}$ and  $\mathfrak{M}_{3+}$, and do they induce the minimum flat correlation? 
To answer this question, let  
$x_1=c_{1,2}^{(1)}$, 
$x_2=c_{1,3}^{(1)}$, 
$y_1=c_{1,2}^{(2)}$, 
$y_2=c_{2,3}^{(2)}$, 
$z_1=c_{1,3}^{(3)}$ and 
$z_2=c_{2,3}^{(3)}$. Then, the quadratic equations from Eq. (\ref{eq:WOMNI}) become (by setting $2\frac{1-\mathfrak{r}_{s_1,s_2}}{1-\rho}=c\big)$
\begin{align}
\label{eq:m=3}
    &x_2^2+y_2^2-x_2y_2+x_2+y_2+1=m^2c\nonumber\\
    &x_1^2+z_2^2-x_1z_2+x_1+z_2+1=m^2c\nonumber\\
    &y_1^2+z_1^2-y_1z_1+y_1+z_1+1=m^2c
\end{align}
and the linear constraints 
\begin{align*}
    x_1+y_1=1\\x_2+z_1=1\\y_2+z_2=1
\end{align*}
Forcing the $[0,1]$ restrictions, an integer solution that maximizes $m^2C$ is 
$$(x_1,x_2,y_1,y_2,z_1,z_2)=(1,0,0,1,1,0).$$
This solution corresponds to $\mathfrak{M}_{3+}$.
An additional integer solution is given by swapping $0$s to $1$s and vice versa. The flip-side solution is $(0,1,1,0,0,1)$ which corresponds to $\mathfrak{M}_{3-}$. For these solutions the covariance coefficients are equal to $m^2c=6$, hence $c=\frac{2}{3}$ and $\mathfrak{r}_{s_1,s_2}=\frac{2}{3}+\frac{\rho}{3}$. 
Is this $\mathfrak{r}_{s_1,s_2}$ the minimum possible value? The answer is \textbf{yes}. 
The linear constraints incorporated into Eq. \ref{eq:m=3} yield
\begin{align}
\label{eq:m=32}
    &x_2^2+y_2^2-x_2y_2+x_2+y_2+1=m^2c\nonumber\\
    &x_1^2+y_2^2+x_1y_2-3y_2+3=m^2c\nonumber\\
    &x_1^2+x_2^2-x_1x_2-2x_1-2x_2+4=m^2c
\end{align}
Equating these three quadratic equations, we obtain the feasible solution set $$(x_1,x_2,y_1,y_2,z_1,z_2)=(x_1,1-x_1,1-x_1,x_1,x_1,1-x_1).$$
Substituting back to Eq. (\ref{eq:m=32}) we have that $x_1(x_1-1)=\frac{m^2c}{3}-1$. 
Since $x_1\in[0,1]$, the maximum value for the covariance coefficients is obtained when either $x_1=0$ or $x_1=1$ which result in the $\mathfrak{M}_{3-}$ and  $\mathfrak{M}_{3+}$ matrices.

\subsubsection{m=4 case}
\vspace{.2cm}
\noindent
As in the previous example, let 
$x_1=c_{1,2}^{(1)},$
 $x_2=c_{1,3}^{(1)},$
 $x_3=c_{1,4}^{(1)},$ 
 $y_1=c_{1,2}^{(2)},$
 $y_2=c_{2,3}^{(2)},$
 $y_3=c_{2,4}^{(2)},$
 $z_1=c_{1,3}^{(3)},$
 $z_2=c_{2,3}^{(3)},$
 $z_3=c_{3,4}^{(3)}$ and 
 $w_1=c_{1,4}^{(4)},$ 
 $w_2=c_{2,4}^{(4)},$
 $w_3=c_{3,4}^{(4)}$. Then, the 
 $\binom{4}{2}$ covariance coefficients from Eq. (\ref{eq:WOMNI}) can be written as
\begin{align*}
    &(x_2^2+y_2^2-x_2y_2+x_2+y_2)+ (x_3^2+y_3^2-x_3y_3+x_3+y_3)+(x_2x_3+y_2y_3+1)=m^2c\\
    &(x_1^2+z_2^2-x_1z_2+x_1+z_2)+(x_3^2+z_3^2-x_3z_3+x_3+z_3)+(x_1x_3+z_2z_3+1)=m^2c\\
    &(x_1^2+w_2^2-x_1w_2+x_1+w_2)+(x_2^2+w_3^2-x_2w_3+x_2+w_3)+(x_1x_2+w_2w_3+1)=m^2c\\
    &(y_1^2+z_1^2-y_1z_1+y_1+z_1)+(y_3^2+z_3^2-y_3z_3+y_3+z_3)+(y_1y_3+z_1z_3+1)=m^2c\\
    &(y_1^2+w_1^2-y_1w_1+y_1+w_1)+(y_2^2+w_3^2-y_2w_3+y_2+w_3)+(y_1y_2+w_1w_3+1)=m^2c\\
    &(z_1^2+w_1^2-z_1w_1+z_1+w_1)+(z_2^2+w_2^2-z_2w_2+z_2+w_2)+(z_1z_2+w_1w_2+1)=m^2c
\end{align*} along with the associated linear constraints 
\begin{align*}
    x_1+y_1=1\\x_2+z_1=1\\x_3+w_1=1\\y_2+z_2=1\\y_3+w_2=1\\z_3+w_3=1
\end{align*}

\noindent
Now, let $(x_1,x_2,y_1,y_2,z_1,z_2)=(1,0,0,1,1,0).$ Then, the system reduces to
\begin{align}
\label{eq:m=4}
    &x_3^2+y_3^2-x_3y_3+2y_3+x_3=m^2c-3\\
    &x_3^2+z_3^2-x_3z_3+2x_3+z_3=m^2c-3\nonumber\\
    &y_3^2+z_3^2-y_3z_3+2z_3+y_3=m^2c-3\nonumber\\
    &w_2^2+w_3^2+w_2w_3+w_3=m^2c-3\nonumber\\
    &w_1^2+w_3^2+w_1w_3+w_1=m^2c-3\nonumber\\
    &w_1^2+w_2^2+w_1w_2+w_2=m^2c-3\nonumber
\end{align}
\noindent
Using the linear constraints and equating the following pairs of equations: $(1,6)$,  $(2,5)$ and $(3,4)$, we get the hyperbolas
\begin{align*}
    &w_1w_2+2w_2+w_1=2\\&w_1w_3+2w_1+w_3=2\\&w_2w_3+2w_3+w_2=2.
\end{align*}
Equating the hyperbolas together, we get the solution set $$(w_1,w_2,w_3)=(x,x,x), \text{ for } 0\leq x \leq 1.$$ 
\noindent
Substituting back to the hyperbolas we obtain $w_1=w_2=w_3=\frac{\sqrt{17}-3}{2}$. Hence, $x_3=y_3=z_3=\frac{5-\sqrt{17}}{2}$. Further, $m^2c=2(21-4\sqrt{17})\approx 9.01515$ (Note that the classical OMNI case corresponds to $m^2c=8$). 
The induced flat correlation is then
$$\mathfrak{r}_{i,j}=\frac{4\sqrt{17}-5}{16}+\frac{21-4\sqrt{17}}{16}\rho\approx 0.72+0.28\rho,$$
whereas in the classical case it is
$$\mathfrak{r}_{i,j}=\frac{3}{4}+\frac{\rho}{4},$$
which is larger when $\rho<1$.

\subsection{Proof of Theorem \ref{thm:maxOMNI} (Lagrange multipliers)}
\label{sec:lagrange}
\noindent
Below, we demonstrate that by solving two Lagrange systems sequentially, we get that classical OMNI maximizes the flat correlation problem. The first Lagrange system has the covariance coefficient terms as Lagrange equations and the cumulative weight equations as Lagrange constraints. The second Lagrange system has the same equations as  the first system but written in terms of the (updated) cumulative weights and the only constraint is given by Eq. (\ref{eq:obs2}).


The worst-case flat correlation (i.e., the largest gap between $\rho=\rho_{s_1,s_2}$ and $\mathfrak{r}_{s_1,s_2}$) is achieved in the WOMNI setting when the common value of 
\begin{align*}
\sum_q \beta_{q,s_1,s_2}&=\Big( \alpha(s_1,s_1) - c^{(s_1)}_{s_2,s_1} \Big)^2 + \Big( \alpha(s_2,s_2) - c^{(s_2)}_{s_1,s_2} \Big)^2 + \sum_{q\neq s_1,s_2}\Big(c_{s_1,q}^{(q)} - c_{s_2,q}^{(q)} \Big)^2.\\
&=\left(1+\sum_{q\neq s_1,s_2} c_{s_1,q}^{(s_1)}\right)^2+
\left(1+\sum_{q\neq s_1,s_2} c_{s_2,q}^{(s_2)}\right)^2+
\sum_{q\neq s_1,s_2}(c_{s_1,q}^{(s_1)}-c_{s_2,q}^{(s_2)})^2\\
&=
\left(1+\sum_{q\neq s_1,s_2} c_{s_1,q}^{(s_1)}\right)^2+
\left(1+\sum_{q\neq s_1,s_2} c_{s_2,q}^{(s_2)}\right)^2+
\sum_{q\neq s_1,s_2}\left((c_{s_1,q}^{(s_1)})^2+(c_{s_2,q}^{(s_2)})^2-2c_{s_1,q}^{(s_1)}c_{s_2,q}^{(s_2)}\right)\\
&=
1+\sum_{q\neq s_1,s_2} 2c_{s_1,q}^{(s_1)}+\sum_{q\neq s_1,s_2} 2(c_{s_1,q}^{(s_1)})^2+\sum_{q\neq s_1,s_2}\sum_{\ell\neq s_1,s_2,q}c_{s_1,q}^{(s_1)} c_{s_1,\ell}^{(s_1)}\\
&\hspace{5mm}+ 
1+\sum_{q\neq s_1,s_2} 2c_{s_2,q}^{(s_2)}+\sum_{q\neq s_1,s_2} 2(c_{s_2,q}^{(s_2)})^2+\sum_{q\neq s_1,s_2}\sum_{\ell\neq s_1,s_2,q}c_{s_2,q}^{(s_2)} c_{s_2,\ell}^{(s_2)}\\
&\hspace{5mm}- \sum_{q\neq s_1,s_2} 2(1-c_{q,s_1}^{(q)}-c_{q,s_2}^{(q)}+c_{q,s_1}^{(q)}c_{q,s_2}^{(q)})\\
&=
1+\sum_{q\neq s_1,s_2} 2c_{s_1,q}^{(s_1)}+\sum_{q\neq s_1,s_2} 2(c_{s_1,q}^{(s_1)})^2+\sum_{q\neq s_1,s_2}\sum_{\ell\neq s_1,s_2,q}c_{s_1,q}^{(s_1)} c_{s_1,\ell}^{(s_1)}\\
&\hspace{5mm}+ 
1+\sum_{q\neq s_1,s_2} 2c_{s_2,q}^{(s_2)}+\sum_{q\neq s_1,s_2} 2(c_{s_2,q}^{(s_2)})^2+\sum_{q\neq s_1,s_2}\sum_{\ell\neq s_1,s_2,q}c_{s_2,q}^{(s_2)} c_{s_2,\ell}^{(s_2)}\\
&\hspace{5mm}+ \sum_{q\neq s_1,s_2} 2c_{q,s_1}^{(q)}+2c_{q,s_2}^{(q)}-2c_{q,s_1}^{(q)}c_{q,s_2}^{(q)}-2
\end{align*}
is minimized.
Rather than minimize this common value correctly, we will seek instead the weights that minimize the sum 
$$
f(\vec{c})=\sum_{\{s_1,s_2\}\in\binom{[m]}{2}} \sum_q \beta_{q,s_1,s_2}.$$
We will show that these minimizing weights induce flat correlation and hence are the solution to our optimization problem.
Collecting the terms in $f(\vec{c})$, we see that for any $i,h,\ell$, the coefficient 
\begin{align*}
&\text{of } c_{i,h}^{(i)}\,\,\text{ is } \underbrace{2(m-2)}_{\substack{
\text{ from the }(m-2)\,\,\sum_q\beta_{q,i,j}\text{ terms}\\
\text{where }j\neq i,h}}+
\underbrace{2(m-2)}_{\substack{
\text{ from the }(m-2)\,\,\sum_q\beta_{q,h,j}\text{ terms}\\
\text{where }j\neq i,h}};\\
&\text{of } (c_{i,h}^{(i)})^2\,\,\text{ is } \underbrace{2(m-2)}_{\substack{
\text{ from the }(m-2)\,\,\sum_q\beta_{q,i,j}\text{ terms}\\
\text{where }j\neq i,h}};\\
&\text{of } c_{i,h}^{(i)}c_{i,\ell}^{(i)}\,\,\text{ is } \underbrace{2(m-3)}_{\substack{
\text{ from the }(m-3)\,\,\sum_q\beta_{q,i,j}\text{ terms}\\
\text{where }j\neq i,h.\ell}}-
\underbrace{2}_{
\text{ from the }\sum_q\beta_{q,h,\ell}\text{ term}}
\end{align*}
Our first step will be to consider fixed total weights $a_i=\alpha(i,i)\geq 1$, and then to minimize $g(\vec{c})$ with these fixed weights acting as constraints on the feasible region (as each weight then gives the equation $1+\sum_{q\neq i}c^{(i)}_{i,q}=a_i$)
Note that we will not impose a positivity constraint on the vector of $c$'s (as this complicated the quadratic program), and we note the obtained minimizing solution will, thankfully, satisfy this condition nonetheless. 

The obtained quadratic program can be written in matrix form
\begin{align}
    \text{min }&\frac{1}{2}(\vec c)^T B \vec c+ q^T\vec c+r\label{eq:KKT1}\\
    \text{s.t. }&A\vec c=\vec b,\notag
\end{align}
where (noting all vectors are column vectors by default, and $\mathbb{J}_{m-1}$ is the $m-1\times m-1$ matrix of all $1$'s)
\begin{itemize}
    \item[i.] $\vec c=(c^{(1)}_{1,2},c^{(1)}_{1,3},\cdots,c^{(1)}_{1,m},c^{(2)}_{2,1},c^{(2)}_{2,3},\cdots,c^{(2)}_{2,m},\cdots,c^{(m)}_{m,1},c^{(m)}_{m,2},\cdots,c^{(m)}_{m,m-1})\in\mathbb{R}^{m(m-1)}$;  
    \item[ii.] $B$ is an $m(m-1)\times m(m-1)$ block matrix given by $B=I_m\otimes M$ and $M$ is an $(m-1)\times (m-1)$ matrix
$M=2(m-4)\bJm+2mI_{m-1}$;  this gives us that $B$ is invertible and positive definite, with inverse equal to 
$$B^{-1}=I_m\otimes \frac{1}{2m}\left(I_{m-1}-\frac{m-4}{(m-2)^2}\bJm\right)$$
    \item[iii.] $q=4(m-2)\mathds{1}_{m(m-1)}=4(m-2)\text{vec}(\mathbb{J}_{m-1,m})$;
    \item[iv.] $r=-\binom{m}{2}2(m-3)$;
    \item[v.] $A=I_m\otimes (\mathds{1}_{m-1})^T\in\mathbb{R}^{m\times m(m-1)}$, so that $A$ is rank $m$ and $\bA^TA=I_m\otimes \bJm\in\mathbb{R}^{m(m-1)\times m(m-1)}$;
    \item[vi.] $\vec b=\vec a-\mathds{1}_{m}$ where $a_i=\alpha(i,i)$.
\end{itemize}
For this quadratic program, we note that 
$$
B+\bA^TA=I_m\otimes \left(2m I_{m-1}+(2m-7)\bJm\right)
$$
has eigenvalues $2m+(m-1)(2m-7)$ (with multiplicity $m$) and $2m$ (with multiplicity $m(m-2)$), and hence $B+\bA^TA$ is positive definite and there is a unique solution to the above quadratic program.
The optimality conditions (see, for example, \cite{boyd2004convex}) here are then (where $\vec{\lambda}$ are the dual variables introduced in the KKT equations)
$$ \begin{pmatrix}
    B& \bA^T\\
    A& \mathbf{0}_{m,m}
\end{pmatrix} \begin{pmatrix}
    \vec{c}^{\,\,*}\\
    \vec{\lambda}^*
\end{pmatrix}=\begin{pmatrix}
    -q\\
    b
\end{pmatrix}$$
We first begin by analytically computing the inverse of the KKT coefficient matrix (see \cite{noble1977applied} for reference)
$$\begin{pmatrix}
    B& \bA^T\\
    A& \mathbf{0}_{m,m}
\end{pmatrix}^{-1}=\begin{pmatrix}
    B^{-1}-B^{-1}\bA^T(AB^{-1}\bA^T)^{-1}AB^{-1}& B^{-1}\bA^T(AB^{-1}\bA^T)^{-1}\\
    (AB^{-1}\bA^T)^{-1}AB^{-1}& -(AB^{-1}\bA^T)^{-1}
\end{pmatrix}$$
Some intermediary computations yield that
\begin{align*}
(AB^{-1}\bA^T)^{-1}&=\frac{2(m-2)^2}{(m-1)}I_m\\
AB^{-1}&=I_m\otimes \frac{1}{2(m-2)^2}(\mathds{1}_{m-1})^T
\end{align*}
so that
\begin{align*}
B^{-1}-B^{-1}&\bA^T(AB^{-1}\bA^T)^{-1}AB^{-1}\\
&=I_m\otimes \left(\frac{1}{2m}I_{m-1}-\frac{m-4}{2m(m-2)^2}\bJm-\frac{1}{2(m-1)(m-2)^2}\bJm\right)\\
&=I_m\otimes \frac{1}{2m}\left(I_{m-1}-\frac{1}{m-1}\bJm\right),
\end{align*}
and
\begin{align*}
B^{-1}\bA^T(AB^{-1}\bA^T)^{-1}=I_m\otimes \frac{1}{m-1}\mathds{1}_{m-1}.
\end{align*}
The optimizing point for the quadratic program is then uniquely defined via
\begin{align*}
    \vec{c}^{\,\,*}&=-\left(I_m\otimes \frac{1}{2m}\left(I_{m-1}-\frac{1}{m-1}\bJm\right)\right)q+\left(I_m\otimes \frac{1}{m-1}\mathds{1}_{m-1}\right)b\\
    &=-\frac{4(m-2)}{2m}\text{vec}\left(\bJmm-\frac{1}{m-1}\bJm\bJmm\right)+\left(I_m\otimes \frac{1}{m-1}\mathds{1}_{m-1}\right)b\\
    &=\frac{1}{m-1}\left((\vec{a}-1)\otimes \mathds{1}_{m-1}\right),
\end{align*} 
which is (luckily) nonnegative.

These $c$'s yields the objective function value of 
\begin{align*}
f(\vec{a})&=\sum_{\{i,j\}\in\binom{[m]}{2}}\left(
\left(1+\frac{m-2}{m-1}(a_i-1)\right)^2+
\left(1+\frac{m-2}{m-1}(a_j-1)\right)^2+
(m-2)\left(\frac{a_i-a_j}{m-1}\right)^2\right)\\
&=\sum_{\{i,j\}\in\binom{[m]}{2}}\left(
\left(\frac{m-2}{m-1}a_i+\frac{1}{m-1}\right)^2+
\left(\frac{m-2}{m-1}a_j+\frac{1}{m-1}\right)^2+
(m-2)\left(\frac{a_i-a_j}{m-1}\right)^2\right)\\
&=\frac{1}{(m-1)^2}\sum_{\{i,j\}\in\binom{[m]}{2}}\left(
\left((m-2)a_i+1\right)^2+
\left((m-2)a_j+1\right)^2+
(m-2)\left(a_i-a_j\right)^2\right)\\
&\propto 2\binom{m}{2}+(m-2)\sum_{\{i,j\}\in\binom{[m]}{2}}
\left(
(m-1)a_i^2+(m-1)a_j^2+2a_i+2a_j-2a_ia_j\right)
\end{align*}
We next seek the total weights (i.e., the vector of $a$'s) that minimizes this objective function.

The minimizer of this objective function (over $\vec{a}$ in any defined region) is the same as the minimizer of 
$$
f_2(\vec{a})=\frac{1}{2}(\vec{a})^T\underbrace{\begin{pmatrix}
    2(m-1)^2& -2 &\cdots&-2\\
    -2&2(m-1)^2 &\cdots&-2\\
    \vdots&\vdots&\ddots&\vdots\\
    -2&-2&\cdots&2(m-1)^2
\end{pmatrix}}_{=B_2}\vec{a}+2(m-1)(\mathds{1}_m)^T\vec{a}$$
where
\begin{align*}
    B_2=2\left((m^2-2m+2)I_m-\mathbb{J}_m\right)\\
    B_2^{-1}=\frac{1}{2(m^2-2m+3)}\left(I_m+\frac{1}{m^2-3m+3}  \mathbb{J}_m\right)
\end{align*}
Consider minimizing $f_2$ subject to the single constraint (given by Eq. \ref{eq:obs2}) that $\sum_i a_i=(\mathds{1}_m)^T\vec{a}=\frac{m(m+1)}{2}$.
There is a unique minimizing solution as
\begin{align*}
    B_2+\mathds{1}_m(\mathds{1}_m)^T=2(m^2-2m+2)I_m-\mathbb{J}_m
\end{align*}
has eigenvalues $2m^2-2m+2$ (with multiplicity $m-1$) and $2m^2-3m+2$ (with multiplicity $1$) and hence is positive definite.
Therefore, the constrained quadratic program has a unique solution, which can be found by solving (again with $\vec{\lambda}$ the KKT dual variables)
$$\begin{pmatrix}
    B_2& \mathds{1}_m\\
    (\mathds{1}_m)^T& 0
\end{pmatrix}\begin{pmatrix}
    \vec{a}^*\\
    \vec{\lambda}^*
\end{pmatrix}=\begin{pmatrix}
    2(m-1)\mathds{1}_m\\
    \frac{m(m+1)}{2}
\end{pmatrix}
$$
The optimal solution is then
\begin{align*}
    \vec{a}^*=-2(m-1)\left(B_2^{-1}-B_2^{-1}\mathds{1}_m(\mathds{1}_m^TB_2^{-1}\mathds{1}_m)^{-1}\mathds{1}_m^TB_2^{-1}\right)\mathds{1}_m+B_2^{-1}\mathds{1}_m(\mathds{1}_m^TB_2^{-1}\mathds{1}_m)^{-1}\frac{m(m+1)}{2}
\end{align*}
Now, 
\begin{align*}
    (\mathds{1}_m^TB_2^{-1}\mathds{1}_m)^{-1}=\frac{2(m^2-3m+3)}{m}\\
    B_2^{-1}\mathds{1}_m=\frac{1}{2(m^2-3m+3)}\mathds{1}_m
\end{align*}
so that
$$
B_2^{-1}\mathds{1}_m(\mathds{1}_m^TB_2^{-1}\mathds{1}_m)^{-1}\mathds{1}_m^TB_2^{-1}=\frac{1}{2m(m^2-3m+3)}\mathbb{J}_m
$$
and 
\begin{align*}
    \vec{a}^*&=-2(m-1)\left(B_2^{-1}-B_2^{-1}\mathds{1}_m(\mathds{1}_m^TB_2^{-1}\mathds{1}_m)^{-1}\mathds{1}_m^TB_2^{-1}\right)\mathds{1}_m+B_2^{-1}\mathds{1}_m(\mathds{1}_m^TB_2^{-1}\mathds{1}_m)^{-1}\frac{m(m+1)}{2}\\
    &=-\frac{2(m-1)}{2(m^2-3m+3)}\mathds{1}_m+\frac{2(m-1)}{2(m^2-3m+3)}\mathds{1}_m +\frac{m+1}{2}\mathds{1}_m\\
    &=\frac{m+1}{2}\mathds{1}_m
\end{align*}
and hence the optimal solution for inducing the maximal total correlation in the WOMNI setting is achieved by setting all $c^{(i)}_{i,j}=1/2$ for $i\neq j$ (i.e., by classic OMNI)
\endproof


\end{document}